\documentclass{article}
\pdfpagewidth=8.5in
\pdfpageheight=11in

\usepackage{kr}

\usepackage{times}
\usepackage{soul}
\usepackage{url}
\usepackage[hidelinks]{hyperref}
\usepackage[utf8]{inputenc}
\usepackage[small]{caption}
\usepackage{graphicx}
\usepackage{amsmath}
\usepackage{amsthm}
\usepackage{booktabs}
\usepackage{algorithm}
\usepackage{algorithmic}
\urlstyle{same}

\usepackage{amssymb}
\usepackage{thmtools}
\usepackage{thm-restate}
\usepackage{mathtools}
\usepackage{booktabs}
\usepackage{multirow}
\usepackage{array}
\usepackage{subcaption}
\usepackage{color}
\usepackage{comment}
\usepackage[switch]{lineno}
\usepackage{xspace}

\usepackage{ifthen}
\newboolean{TR} % Technical report mode
\setboolean{TR}{true}
%\setboolean{TR}{false}

\usepackage{xcolor}
\definecolor{lightblue}{HTML}{AADDFF}
\definecolor{lightred}{HTML}{FF5522}
\definecolor{lightpurple}{HTML}{DDAACC}
\definecolor{lightgreen}{HTML}{AAFFAA}
\definecolor{lightgray}{HTML}{DDDDDD}
\definecolor{strongyellow}{HTML}{FFE219}

\newcommand{\bcqname}[0]{\ensuremath{\mathsf{BCQ}}}
\newcommand{\bcqclass}[1]{\ensuremath{\bcqname_{#1}}}

\newcommand{\stripped}[1]{\mathsf{strip}(#1)}
\newcommand{\nowindows}[1]{\mathsf{wfree}(#1)}
\newcommand{\nocarry}[1]{#1}
\newcommand{\notime}[1]{\mathsf{tfree}(#1)}
\newcommand{\ground}{\mathsf{grnd}}
\newcommand{\noTimeGroundNoWindow}[2]{\notime{\tground_{#2}(#1)}}
\newcommand{\tground}{\mathsf{t}\ground}
\newcommand{\tfground}{\mathsf{tf}\ground}

\newcommand{\letterForLarsFragments}{\ensuremath{\mathbf{L}}\xspace}
\newcommand{\CL}[1]{\ensuremath{\letterForLarsFragments{}^+_{\mathsf{#1}}}\xspace}
\newcommand{\Lfull}{\CL{}}

\newcommand{\Lany}{\letterForLarsFragments}

% Predicates
\newcommand{\pred}[1]{\ensuremath{\mathsf{#1}}}

\newcommand{\poperator}[0]{\pred{bOpr}}

\RequirePackage{stmaryrd}
\newcommand{\freshPred}[1]{\llbracket #1\rrbracket}

\newcommand{\ptemp}[0]{\pred{bTmp}}
\newcommand{\ptempFull}[0]{\pred{beltTmp}}
\newcommand{\phightemp}[0]{\pred{high}}
\newcommand{\plowspeed}[0]{\pred{slow}}

\newcommand{\pbelt}[0]{\pred{belt}}

\newcommand{\pproblemId}[0]{\pred{incId}}

\newcommand{\pspeed}[0]{\pred{bSpeed}}
\newcommand{\perror}[0]{\pred{block}}
\newcommand{\pwarning}[0]{\pred{warn}}

\newcommand{\passign}[0]{\pred{assign}}

\newcommand{\pbrokengear}[0]{\pred{brkG}}

\newcommand{\poperatorFull}[0]{\pred{beltOperator}}

\newcommand{\pbrokengearFull}[0]{\pred{brokenGear}}
\newcommand{\pproblemIdFull}[0]{\pred{incidentId}}

\newcommand{\complclass}[1]{\ensuremath{\textsc{#1}}\xspace}
\newcommand{\ptimeclass}[0]{$\complclass{PTime}$}

\newcommand{\doubleexp}[0]{$\complclass{2ExpTime}$}

\newcommand{\doubleexpcomplete}[0]{$\complclass{2ExpTime}$-complete}
\newcommand{\nlogspace}[0]{$\complclass{NL}$}

\newcommand{\np}[0]{$\complclass{NP}$}

\newcommand{\pspace}[0]{$\complclass{PSpace}$}

\newcommand{\defeq}{\coloneqq}
\newcommand{\elars}{LARS$^+$\xspace}
\newcommand{\lars}{LARS}

\newcommand{\dom}[1]{\ensuremath{\mathsf{dom}(#1)}}

\newcommand{\tlinearg}[1]{\ensuremath{\vec{#1}}}
\newcommand{\timeline}[0]{\tlinearg{T}}

\newif\ifdraft\drafttrue
\newif\iffinal\finalfalse
\newif\ifextended\extendedfalse
\newif\ifsubmission\submissiontrue
\newif\ifdotikz\dotikzfalse
\newif\ifinlineref\inlinereffalse

\newcolumntype{C}[1]{>{\centering\let\newline\\\arraybackslash\hspace{0pt}}m{#1}}
\newcommand{\chasebench}[0]{\emph{ChaseBench}}
\newcommand{\deep}[0]{\emph{Deep100}}

%%%
\makeatletter
\newcommand*\bigcdot{\mathpalette\bigcdot@{.6}}
\newcommand*\bigcdot@[2]{\mathbin{\vcenter{\hbox{\scalebox{#2}{$\m@th#1\bullet$}}}}}
\makeatother
%\newcommand{\event}[1]{\ensuremath{\overleftrightarrow{#1}}}
% \newcommand{\event}[1]{\ensuremath{\raisebox{1ex}{\ensuremath{\bigcdot}}#1}}

% non-math chars

 % used in example environment

% \newcommand{\leanparagraph}[1]{\smallskip\noindent\textbf{#1.} }
 % no reason to shorten in this paper
% \newcommand{\leanitalicparagraph}[1]{\smallskip\noindent\textit{#1.} }

%\newenvironment{myitemize}
%{\begin{list}{$\bullet$}{%
%\setlength{\topsep}{2pt} %0pt
%\setlength{\leftmargin}{0pt} %5pt or variant2: %0pt or v3:10
%\setlength{\itemindent}{10pt}}} %7pt or variant2: %10pt or v3:00
%{\end{list}}

%\newcounter{myenumctr}
%\newenvironment{myenumerate}{\begin{list}{(\arabic{myenumctr})}{\usecounter{myenumctr}
%\topsep=0pt
%\setlength{\leftmargin}{0pt}
%\setlength{\itemindent}{\labelwidth}
%\setlength{\itemsep}{0cm}}}
%{\end{list}}

\newcommand{\nop}[1]{}

\newcommand{\funcStyle}[1]{\mathsf{#1}}

\newcommand{\rewrite}[1]{\ensuremath{\funcStyle{rew}(#1)}}
\newcommand{\rewriteT}[1]{\ensuremath{\funcStyle{rew}_{\timeline}(#1)}}

\newcommand{\arity}[1]{\ensuremath{\funcStyle{ar}(#1)}}

% THEOREMS

\newtheorem{theorem}{Theorem}
\newtheorem{lemma}{Lemma}

\newtheorem{proposition}{Proposition}

\theoremstyle{definition}

\newtheorem{definition}{Definition}
\newtheorem{example}{Example}

% MATH: specific
\newcommand{\Predicates}{\ensuremath{\cP}}

\newcommand{\Constants}{\ensuremath{\cC}}

\newcommand{\Variables}{\ensuremath{\cV}}
\newcommand{\Nulls}{\ensuremath{\cN}}

\newcommand{\Atoms}{\ensuremath{\cA}}

%
%

% ASP
\newcommand{\head}[1]{\ensuremath{\mathsf{h}(#1)}}
\newcommand{\body}[1]{\ensuremath{\mathsf{b}(#1)}}

% STREAMS, WINDOWS

% LARS

\renewcommand{\vec}[1]{{\mathbf{#1}}}
\newcommand{\tuple}[1]{\langle #1\rangle} % used, e.g., for positions

\mathchardef\minus="2D

\newcommand{\intpr}{v} % math op makes bad restriction spacing v|T

% ASP

\newcommand{\window}{\ensuremath{\boxplus}}
\newcommand{\timeWindow}[1]{\ensuremath{\boxplus}^{#1}}

% math font
\newcommand{\cA}{\ensuremath{\mathcal{A}}}

\newcommand{\cC}{\ensuremath{\mathcal{C}}}

\newcommand{\cN}{\ensuremath{\mathcal{N}}}

\newcommand{\cP}{\ensuremath{\mathcal{P}}}

\newcommand{\cV}{\ensuremath{\mathcal{V}}}

\newcommand{\bbN}{\ensuremath{\mathbb{N}}}

\newenvironment{myitemize}
{\begin{list}{{\small$\bullet$}}{%
\setlength{\topsep}{1pt} %0pt
\setlength{\leftmargin}{0pt} %5pt or variant2: %0pt or v3:10
\setlength{\itemindent}{10pt}}} %7pt or variant2: %10pt or v3:00
{\end{list}}

\newcounter{myenumctr}

\newcommand{\blank}{\text{\textvisiblespace}}
\newcommand{\firstPointTimeline}[1]{0}

\pdfinfo{
/TemplateVersion (KR.2022.0)
}

\pdfinfo{
/Title Chasing Streams with Existential Rules
/Author Jacopo Urbani, Markus Kr\"otzsch, Thomas Eiter
}

\title{Chasing Streams with Existential Rules}

\author{Jacopo Urbani$^1$\and Markus Kr\"otzsch$^2$\and Thomas Eiter$^3$\\
\affiliations
$^1$Vrije Universiteit Amsterdam, Department of Computer Science, The Netherlands\\
$^2$Technische Universit\"at Dresden, Knowledge-Based Systems Group, Germany\\
$^3$Technische Universit\"at Wien, Institute of Logic and Computation, Austria\\
\emails
jacopo@cs.vu.nl,markus.kroetzsch@tu-dresden.de,eiter@kr.tuwien.ac.at}

\begin{document}

\maketitle

\begin{abstract}
We study reasoning with existential rules to perform query answering over
streams of data.  On static databases, this problem has been widely studied, but
its extension to rapidly changing data has not yet been considered. To
bridge this gap, we extend LARS, a well-known framework for rule-based stream
reasoning, to support existential rules. For that, we show how to
translate LARS with existentials into a semantics-preserving set of existential
rules.  As query answering with such rules is undecidable in general, we
describe how to leverage the temporal nature of streams and
present suitable notions of acyclicity that ensure decidability.
\end{abstract}

\section{Introduction}

Streaming data arises in many applications, fostered by the need of deriving
timely insights from emerging information and the inherent impossibility of
storing all available data~\cite{margara_streaming_2014}. Stream reasoning has
become a productive area of KR with many formalisms
\cite{anicic_ep-sparql:_2011,le-phuoc_native_2011,barbieri_c-sparql:_2010,TigerH16,DBLP:journals/datasci/DellAglioVHB17,Kharlamov+19:STARQL,walkega2019reasoning}.
This multiplicity is justified by the breadth of scenarios where stream
processing is useful. Many of the approaches are distinguished from classical
temporal reasoning, e.g., since data snapshots (\emph{windows}) play an
important role to reduce data volumes.

A well-known formalism in this space is LARS~\cite{lars}, which is a rule-based
language for stream reasoning that combines concepts from logic programming with
dedicated stream operators to express windows and temporal quantifiers. For
example, the LARS rule $r_1{:}\ \timeWindow{3} \Box \ptempFull(X,Y) \land
\phightemp(Y) \rightarrow \pwarning(X)$ issues a warning if the temperature on a
conveyor belt has been high for all ($\Box$) last three time points
($\timeWindow{3}$).

Another prominent field in KR are \emph{existential rules}, which are also used
as a basis for ontological models, especially in applications with large amounts
of data
\cite{BLMS11:decline,DBLP:journals/jair/GrauHKKMMW13,DBLP:conf/kr/GottlobLP14}.
As a simple example, the rule $r_2{:}\ \pbelt(X)\,{\rightarrow}\,\exists Y.
\poperatorFull(X,Y)$, expresses that every belt has an operator (even if
unknown). Existential quantification is central for ontologies and provides high
expressivity beyond plain Datalog \cite{KMR:ChasePower2019}. While reasoning is
well-known to be undecidable in general~\cite{DBLP:conf/icalp/BeeriV81}, many
well-behaved language fragments and practical implementations exist
\cite{bench_chase,vlog-chase,DBLP:journals/pvldb/BellomariniSG18}. In
particular, the Datalog$^{\pm}$ approach
(Cal{\`{\i}} et
al.\ \citeyear{DBLP:conf/lics/CaliGLMP10,DBLP:journals/ws/CaliGL12};
Gottlob et al.\ \citeyear{DBLP:conf/kr/GottlobLP14,DBLP:conf/rweb/GottlobMP15})
turned out to be fruitful.

Until now, however, these areas have not been combined, and stream reasoning
approaches do not support existential rules.  Even for logic-based ontology
languages in general, solutions only seem to exist for specific cases where
queries are rewritable
\cite{Kharlamov+19:STARQL,DBLP:journals/amcs/Kalayci0CRXZ19}. As a consequence,
it is often unclear how existing ontological background knowledge can be used in
stream reasoning.

Additionally, the lack of existential quantification prevents useful
modelling techniques for stream analysis. In particular, existential
quantification can be used to represent temporal events,
possibly spanning multiple time points, or to track unknown individuals.  For
instance, it can be used to create a new incident ID if the temperature on a
belt is high for too long, or to track a not-yet-recognized object within a
bounding box in a video stream. Notice that while in principle events could be
modeled without value invention, i.e., using ad-hoc relations, doing so would
put an upper bound to the number of possible events which might be undesirable
as the future stream is typically unknown. A similar argument applies to the
example above about objects within bounding boxes: it is arguably more natural
to introduce new values and treat them as first-class individuals.

With this motivation in place, we developed an extension of existential rules
with LARS-based temporal quantifiers called \elars. Due to undecidability of
query answering with existential rules, our objective are decidable fragments,
with the following contributions:

\begin{myitemize}
\itemsep=0pt
 \item We introduce \elars as an existential stream
    reasoning language with a model-theoretic semantics.

\item We give a semantics-preserving transformation from \elars to existential
    rules to allow query answering.
Doing so allows us to exploit existing decidability results, but these are
limited in their use of time. We thus present  \emph{time-aware
extensions of acyclicity notions}\/ for \elars{} programs.

\item Initial
    experiments suggest that our
    method is promising.%
    \footnote{Source code is
      at~\url{https://github.com/karmaresearch/elars}%
\ifthenelse{\boolean{TR}}{%
.}{%
; for a longer version of this paper see \cite{tr}.}}
\end{myitemize}

\section{\elars}\label{sec_elars}

Currently, LARS and
DatalogMTL~\cite{10.5555/3298239.3298397,walkega2019reasoning} are popular for
rule-based reasoning over data streams. While we focus on LARS, some of our work
may be adapted to DatalogMTL.

To cope with big data volumes, \lars{} allows one to restrict streams to data
snapshots (i.e., substreams) taken by generic window operators $\window$.
Typically, windows are used to consider only the knowledge in the most recent
past, but this is not enough to avoid a complexity explosion or even
undecidability that could arise from reasoning over an indefinite future. To
overcome this problem, it is common in this domain to restrict future
predictions up to a horizon of interest $h$, which is moved forward
indefinitely.

Our language \elars{} can be viewed as an extension of existential rules with
temporal features of LARS. In the choice of data-snapshot operators, we take
inspiration from \emph{plain LARS}, which is a LARS fragment that is apt for
efficient implementation~\cite{laser}.

\smallskip

\noindent \textbf{Syntax\;} We consider a two-sorted logic with abstract elements and
the natural numbers $\mathbb{N}$ as time points.  We assume infinite sets
$\Variables_A$ of \emph{abstract variables}, $\Variables_T$ of \emph{time
variables}, $\Nulls$ of \emph{labelled nulls}, and $\Constants$ of
\emph{constants} that are mutually disjoint and disjoint from $\mathbb{N}$.
\emph{Abstract terms} (resp.\ \emph{time terms})  are elements of
$\Variables_A\cup\Nulls\cup\Constants$ (resp., $\Variables_T\cup\mathbb{N}$).

Predicates $p$ are from a set $\Predicates$ of predicates and have arity
$\arity{p}\geq 0$, with each position typed (abstract or time sort).  A
\emph{normal atom} is an expression $p(\vec{t})$,
$\vec{t}=t_1,\ldots,t_\arity{p}$, where $t_i$ is a term of proper sort.  An
\emph{arithmetic atom} has the form $t_1\leq t_2$ or $t_1=t_2+t_3$ for time
terms $t_1,t_2,t_3$. The set of all \emph{atoms} (normal and arithmetic) is
denoted $\Atoms$.  For an atom $\alpha$ (or any other logical expression
introduced below), the \emph{domain} $\dom{\alpha}$ of $\alpha$ is the set of
all terms in $\alpha$; we write $\alpha[\vec{x}]$ to state that
$\vec{x}=\dom{\alpha}\cap(\Variables_A\cup\Variables_T)$; and we say that
$\alpha$ is \emph{ground} if it contains no variables.

A predicate $p\,{\in}\,\Predicates$ is \emph{simple} if it has no position of
time sort, while an atom is \emph{simple} if it normal and has a simple
predicate. A \emph{\elars atom} $\alpha$ has the form
\begin{linenomath}
\begin{align}
\alpha \defeq a \mid b \mid @_T b \mid \window^n @_T b \mid \window^n \Diamond b\mid \window^n \Box b
\label{eq_elars_atom}
\end{align}
\end{linenomath}
\noindent where $a$ is an arithmetic atom, $b$ is either a null-free
simple atom or $\top$ (which holds true at all times),
$T$ is a time term, and $n\in\mathbb{N}$.
\emph{Window operators} $\window^n$ restrict attention
back to $n$ time points in the past,
and $@_T$ (resp.\ $\Box$, $\Diamond$) indicates that a formula holds at time $T$
(resp., \emph{every}, {\em some} time point).

Arithmetic atoms do not depend on time, whereas atoms $@_T b$ refer to a
specific time $T$.  All other \elars atoms are interpreted relative to some
current time point.  Simple atoms $b$ can equivalently be written as $\window^0
\Diamond b$ or as $\window^0 \Box b$.

\begin{definition}\label{def_elars_rules} A \emph{\elars rule} is an expression
    of the form
\begin{linenomath}\begin{align}\label{eq:erule}
r=\Box\forall\vec{x},\vec{y}.(B[\vec{x},\vec{y}]\rightarrow\exists\vec{v}.H[\vec{y},\vec{v}])
\end{align}\end{linenomath}
\noindent where $\vec{x}$, $\vec{y}$, and $\vec{v}$ are mutually disjoint sets
of variables, and $\vec{v}$ contains only abstract variables; the body
$B[\vec{x},\vec{y}]$ is a conjunction of \elars atoms; and the head
$H[\vec{y},\vec{v}]$ is a conjunction of atoms of the form $b$ or $@_T b$ in
(\ref{eq_elars_atom}).  We set $\body{r}\defeq B$ and $\head{r}\defeq H$, and we
usually omit the leading $\Box$ and universal quantifiers when writing rules.

A \emph{\elars program} is a finite set of \emph{\elars rules}; we denote  the
set of all such programs by \Lfull.  \end{definition}

\noindent \textbf{Semantics\;} Like for LARS, the semantics of \elars is based on streams.
Formally, a \emph{stream} $S=(\timeline,\intpr)$ consists of a timeline
$\timeline=[0,h]\subset\bbN$ and an \emph{evaluation function} ${\intpr : \bbN
\to 2^\Atoms}$ such that, for all $t\in\bbN$, $\intpr(t)$ is a set of ground normal
atoms and $\intpr(t)=\emptyset$ if $t\notin\timeline$. We call $S$ a
{\em data stream} if only extensional atoms occur in $S$, i.e., atoms with
designated predicates not occurring in rule heads.  Given $n\in\bbN$ and
$t\in\timeline$, we write $w_n(S,t)$ for the stream $([0,t],\intpr')$ where for
any $t'\in \mathbb{N}$, $\intpr'(t')=\intpr(t')$ if $t-n\leq t'\leq t$, and
$\intpr'(t)=\emptyset$ otherwise; we call $w_n(S,t)$ a \emph{window}\/ of size
$n$ on $S$ at $t$.

Models of \elars are special streams.  For a stream $S=(\timeline,\intpr)$, a
simple ground atom $b$, and $t,t',n\in\bbN$, we write:

\begin{tabbing}
\qquad\=$S,t \models \Diamond b \,/\,  \Box b$~~~~\=\+\kill
$S,t \models b$~~~ if $b\in\intpr(t)$, \quad $S,t\models @_{t'} b$~~~ if $S,t'\models b$,\\[3pt]
$S,t \models \Diamond b \,/\,  \Box b$ \> if $S,t''\models b$ for some /  all $t''\in\timeline$,\\[3pt]
$S,t \models \window^n \beta$ \> if $w_n(S,t),t\models\beta$.
\end{tabbing}
Further, $S,t\models\top$ holds for all $t \in \timeline$ and $S,t\models a$
for all ground arithmetic atoms $a$ that express a true relation on $\mathbb{N}$.
To define satisfaction of rules on a stream $S$ at time point $t$, we
introduce the auxiliary notion of
\emph{$\timeline$-match} $\sigma$ for a set $C$ of atoms on
$S$ and $t$ as a sort-preserving mapping from the variables of $C$ to
terms, s.t.\ (i) each time variable $X$ is mapped to $\timeline$
($X\sigma \,{\in}\,\timeline$)
and (ii) $S,t \models \alpha\sigma$ for each $\alpha\,{\in}\,C$.

\begin{definition}\label{def_elars_rule_sem}
A \elars rule $r$
as in \eqref{eq:erule} is \emph{satisfied} by a stream $S=(\timeline,\intpr)$, written $S\models r$,
if either (i) $\head{r}$ contains some time point $t \notin \timeline$
(i.e., ignore inference out of scope),
or (ii) for all $t\in\timeline$, every $\timeline$-match
$\sigma$ of $\body{r}$ on $S$ and $t$
is extendible to a $\timeline$-match $\sigma'\supseteq\sigma$ of $\body{r}\cup\head{r}$ on $S$ and $t$.

A program $P\in\Lfull$ is satisfied by $S$, written $S\models P$, if $S\models
r$ for all $r\in P$.  A data stream $D=(\timeline',\intpr')$ is satisfied by
$S$, written $S\models D$, if $\timeline'\subseteq\timeline$ and
$\intpr'(t)\subseteq\intpr(t)$ for all $t\in\timeline'$.  We then call $S$ a
\emph{model} of $P$ resp.~$D$.
\end{definition}

\begin{example} Consider the data stream $D \,{=}\, ([0,9],\intpr)$, where
$\intpr(t) \,{=}\, \{ \pbelt(b_1),\phightemp(90), \ptempFull(b_1,tmp(t))\}$
for each $t\,{\in}\,[0,9]$, where $tmp(t) \,{=}\, 90$ if $t \,{\leq}\, 4$ and
$tmp(t) \,{=}\, 70$ otherwise. Then any
model $S$ of the rules $r_1,r_2$ in Section~1 and $D$ fulfills
$S,4 \,{\models}\, \pwarning(b_1)
\,{\land}\, \poperatorFull(b_1,v)$ for some constant or null $v$.
Similarly $S,5 \,{\models}\, \poperatorFull(b_1,v')$ for some constant or
null $v'$ while $S,5 \,{\models}\,  \pwarning(b_1)$
may fail.
\end{example}

\newcommand{\titleQuerySection}{Query Answering with \elars}
\section{\titleQuerySection}\label{sec_elars-to-tgds}

The query answering problem in \elars{}  is as follows.

\begin{definition}\label{def:bcq}
A \emph{\elars Boolean Conjunctive Query (BCQ)} $q$ has the form
$\exists\vec{x}.Q[\vec{x}]$, where $Q$ is a conjunction of \elars atoms.  A
stream $S=(\timeline,\intpr)$ satisfies $q$ at time $t$, written $S,t\models q$,
if some $\timeline$-match $\sigma$ of $Q$ on $S$ and $t$ exists. A program $P
\in \Lfull$ and data stream $D$ \emph{entail} $q$ at time $t$, written
$P,D,t\models q$, if $S,t\models q$ for every model $S$ of $P$ and $D$.
\end{definition}

For instance, a BCQ could be $\exists X. \window^5 \pwarning(X)$, which asks if
there has been a warning over the same belt in the last 5 time points. To solve
BCQ answering with \elars, we propose a consequence-preserving rewriting
$\rewrite{\cdot}$  to existential rules with a time sort. This rewriting is
useful because it will allow us to exploit known results for existential rules,
e.g., acyclicity notions~\cite{DBLP:journals/jair/GrauHKKMMW13}.

Our proposed rewriting of $P$ into $\rewrite{P}$ has 5 steps:

\noindent (1) Each atom $\window^n \Diamond p(\vec{t})$ is replaced by
$\window^n @_T\, p(\vec{t})$, where $T$ is a fresh variable used only in one
atom.

\noindent (2) For any simple predicate $p$, we add auxiliary predicates
$\freshPred{\window\Box\, p}$ and $\freshPred{\window @\, p}$ of arity
$\arity{p}{+}2$ resp.\ $\arity{p}{+}3$.  Intuitively, $\freshPred{\window\Box\,
p}(\vec{t},n,C)$ and $\freshPred{\window @\, p}(\vec{t},n,T,C)$ mean that
$\window^n \Box p(\vec{t})$ and $\window^n @_T\, p(\vec{t})$ hold at time $C$,
respectively.

\noindent (3) Using a fresh variable $C$ to represent the current time,
we rewrite non-arithmetic atoms $\alpha$ in $P$ (where $\top$ is $\top\!()$) to
\begin{linenomath}
\begin{align*}
\rewrite{\alpha} =
\left\{
\begin{array}{ll}
  \freshPred{\window\Box\,p}(\vec{t},0,C) &  \text{if } \alpha = p(\vec{t}),\\
 \freshPred{\window\Box\,p}(\vec{t},0,T) &  \text{if } \alpha = @_T\, p(\vec{t}),\\
 \freshPred{\window\Box\,p}(\vec{t},n,C) & \text{if } \alpha = \window^n \Box p(\vec{t}),\\
 \freshPred{\window @\, p}(\vec{t},n,T,C) & \text{if } \alpha = \window^n @_T\, p(\vec{t})
\end{array}\right.
\end{align*}
\end{linenomath}

\noindent (4) We add $\freshPred{\window \Box\,\top}(0,C)$ in rule bodies
not containing $C$.

\noindent (5) For every predicate $p$
(including $\top$), we add the following rules to $P$, where $\vec{X}$ is a list of variables
of length $\arity{p}$ and
$m = \max (0, n \mid \window^n$ occurs in $P)$:
{% <- important brace to localise length change
\begin{linenomath}
{\setlength{\belowdisplayskip}{4pt}%
\allowdisplaybreaks%
\begin{align}%
0\,{\leq}\,C & \to \freshPred{\window\Box\,\top}(0,C)\label{rul_top}\\
\freshPred{\window\Box\,p}(\vec{X},0,0) &\to \freshPred{\window\Box\,p}(\vec{X},m,0) \label{rul_boxinit}\\
\!\!\freshPred{\window\Box\,p}(\vec{X},N',C)\wedge N'{=}N{+}1 &\to \freshPred{\window\Box\,p}(\vec{X},N,C) \label{rul_boxshift}%\\
\end{align}%
}%
\setlength{\abovedisplayskip}{0pt}%
\begin{align}%
\begin{split}
 \!\!\freshPred{\window\Box\,p}(\vec{X},N,C) \wedge N'{=}N&{+}1 \wedge N'{\leq}m \wedge
 C'{=}C{+}1
\\[-3pt]
{}\wedge \freshPred{\window\Box\,p}(\vec{X},0,C')
	&\to \freshPred{\window\Box\,p}(\vec{X},N',C')
\end{split}\label{rul_boxexpand}\\
\freshPred{\window\Box\,p}(\vec{X},0,C) &\to \freshPred{\window @\, p}(\vec{X},0,C,C) \label{rul_boxinitat}\\
\begin{split}
\!\!\!\freshPred{\window @\, p}(\vec{X},N,T,C) \wedge N' &{\leq} m
  \wedge N'{=}N{+}1 \\[-3pt]
{}\wedge I\,{\leq}\,1\wedge C'{=}C{+}I &\to\freshPred{\window @\, p}(\vec{X},N'\!,T,C')
\end{split}\label{rul_boxexpandat}%
\end{align}%
\end{linenomath} %
}%

We rewrite a \elars BCQ $\exists\vec{x}.Q$ and time point $t$ similarly to
\mbox{$\rewrite{\exists\vec{x}.Q,t}=\exists\vec{x}.\bigwedge_{\alpha\in Q}
\rewrite{\alpha}\wedge C{\leq}t\wedge t{\leq}C$} (treating atoms $\window^n
\Diamond p(\vec{t})$ as before), and a stream $S\,{=}\,(\timeline,\intpr)$ to
facts $\rewrite{S}=\{\freshPred{\window\Box\,p}(\vec{t},0,s)\mid
p(\vec{t})\in\intpr(t), t\in\timeline \}$.
\begin{example}
We illustrate the rewriting on $r_1$. Step (2) creates predicates
$\freshPred{\window\Box \ptempFull}$, $\freshPred{\window\Box \phightemp}$,
and $\freshPred{\window\Box \pwarning}$ and Step (3) the rule
\mbox{$\freshPred{\window\Box \phightemp}(X,Y,3,C)\land$} \mbox{$\freshPred{\window\Box \phightemp}(Y,0,C) \to \freshPred{\window\Box
        \pwarning}(X,0,C)$}. Step (5) adds auxiliary
        rules to implement the semantics; e.g.,
rule (6) ensures that ``$\window\Box$''-facts survive across time
        points, say if $\freshPred{\window\Box \ptempFull}(a,b,0,6)$ and
        $\freshPred{\window\Box \ptempFull}(a,b,2,5)$ hold, then
 $\freshPred{\window\Box \ptempFull}(a,b,3,6)$ should hold as well.
\end{example}
Let us denote by $P'\models_\timeline q'$ entailment of a BCQ $q'$ from
existential rules $P'$ with timeline $\timeline$, which is defined using
$\timeline$-matches as $P,D,t \models q$ but disregarding $D$ and $t$. Then:

\begin{restatable}{theorem}{thmLARSplusIntoExrules}\label{theo_elars-in-exrules}
For any $P\,{\in}\,\Lfull$, BCQ $q$, data stream $D$ on
  $\timeline$, and $t{\in}\timeline$ holds
$P,D,t\,{\models}q$ iff
      $\rewrite{P}\cup\rewrite{D}{\models_\timeline}\rewrite{q,t}$.
\end{restatable}

Theorem~\ref{theo_elars-in-exrules} is important as it allows us to implement
BCQ answering in \elars{} using existential rule engines, e.g.,
GLog~\cite{glog}; arithmetic atoms over $\timeline$ can be simulated with
regular atoms: simply add the set $\rewrite{\timeline}$ of all true instances of
arithmetic atoms in $P$ over $\timeline$ and view
$\rewrite{P}\cup\rewrite{D}\cup\rewrite{\timeline}$ as a single-sorted theory.

\newcommand{\titleComplexity}{Decidability}
\section{\titleComplexity}\label{sec_tlwa}

As BCQ entailment over existential rules is undecidable, we desire that the
rewriting $\rewrite{\cdot}$ falls into a known decidable fragment. Such may be
defined by \emph{acyclicity conditions}\/
\cite{DBLP:journals/jair/GrauHKKMMW13}, which ensure that a suitable
\emph{chase}, which is a versatile class of reasoning algorithms for existential
rules \cite{bench_chase} based on ``applying'' rules iteratively, will terminate
over a given input. We use a variant of the \emph{skolem chase}
\cite{Marnette09:superWA}, using nulls instead of skolem terms (aka
\emph{semi-oblivious chase}), extended to the time sort%
\ifthenelse{\boolean{TR}}{}{(see ~\cite{tr})}.

Conditions like the canonical {\em weak acyclicity (WA)}\/
\cite{fagin_data_2005} ensure in fact \emph{universal termination}, i.e., chase
termination for a given rule set over all sets of input facts.  We can thus
apply such criteria to $\rewrite{P}$ (viewed as single-sorted theory) while
ignoring $\rewrite{D}$ and $\rewrite{\timeline}$. Universal termination may
here be seen as an analysis that disregards time. To formalise this, let
$\stripped{P}$ result from $P$ by deleting all arithmetic atoms, window
operators, and temporal quantifiers, and let $\mathbf{CT}$ and $\mathbf{WA}$ be
the classes of all rule sets on which the skolem chase universally terminates
and of all weakly acyclic rule sets, respectively. Then:

\begin{restatable}{theorem}{thmAcycStripped}\label{theo_acyc_stripped}
For any $P\in\Lfull$, we have
(i) $\stripped{P}\,{\in}\,\mathbf{CT}$ iff $\rewrite{P}\,{\in}\,\mathbf{CT}$
and
(ii) $\stripped{P}\,{\in}\,\mathbf{WA}$ iff $\rewrite{P}\,{\in}\,\mathbf{WA}$.
\end{restatable}

Analogous results hold for elaborated acyclicity notions
\cite{DBLP:journals/jair/GrauHKKMMW13}.  Notably, we can check acyclicity on the
simpler rule set $\stripped{P}$.  With WA as a representative notion, we let
$\CL{LWA}\,{=}\, \{ P\ \in \Lfull \mid \stripped{P} \in \mathbf{WA}\}$.

While easy to check, universal termination also considers situations that are
impossible on properly encoded streams.

\begin{example}\label{ex_univTermOverestimate} Consider
    $P\,{=}\,\{@_T\,p(X,Y)\wedge T'{=}T{+}1\to \exists V.@_{T'}\,p(Y,V)\}$.  The
    skolem chase on $\rewrite{P}\cup\rewrite{D}\cup\rewrite{\timeline}$
    terminates on all $\timeline$ and $D$, but not universally for non-standard
timelines where e.g.,\ $0=0\,{+}\,1$ holds.  That is, reasoning with $P$ always
terminates despite $P\not\in \CL{LWA}$.
\end{example}
We thus introduce \emph{time-aware acyclicity}, which retains relevant temporal
information instead of working with $\stripped{P}$ only.  First, to simplify
$P$, we fix a fresh time variable $N$ and replace all \elars atoms in all rules
as follows:

\vspace{-0.5\baselineskip}

\begin{linenomath}
\hspace*{-20pt}
\begin{tabular}{@{}l@{~}l@{}}
\parbox[b]{0.23\textwidth}{
\begin{align}
 p(\vec{t}) &\mapsto @_N\, p(\vec{t})\label{eq_tlwa_simple}\\
 \window^n \Box p(\vec{t}) &\mapsto @_N\, p(\vec{t}) \label{eq_tlwa_box}
\end{align}
} &
\parbox[b]{0.25\textwidth}{
\begin{align}
 \window^n @_T\, p(\vec{t}) &\mapsto @_T\, p(\vec{t}) \label{eq_tlwa_at}\\
 \window^n \Diamond p(\vec{t}) &\mapsto @_U\, p(\vec{t}) \label{eq_tlwa_diamond}
\end{align}
}
\end{tabular}
\end{linenomath}

\vspace{-0.5\baselineskip}

\noindent where \eqref{eq_tlwa_simple} refers to atoms with no surrounding
\elars operators and $U$ in \eqref{eq_tlwa_diamond} is a fresh time variable
unique for each replacement; arithmetic atoms are kept unchanged.  The resulting
program is denoted by $\nowindows{P}$ (``\emph{window-free}'').

\begin{example}
\label{ex_tlwa_nowin}
Let $P$ consist of the following rules:
\begin{linenomath}
\begin{align}
\window^3\Box\, p(X) &\to\exists Y.q(X,Y) \label{rul_ex_tlwa_exists}\\
@_T\, q(X,Y)\wedge U\,{=}\,T\,{+}\,1 & \to @_U\, p(Y) \label{rul_ex_tlwa_next}
\end{align}
\end{linenomath}
As in Example~\ref{ex_univTermOverestimate}, the skolem chase on $\rewrite{P}$ terminates if
the given input data encodes a valid timeline,
else it may not (indeed, $P\not\in\CL{LWA}$).
In $\nowindows{P}$, \eqref{rul_ex_tlwa_exists} is changed to
\begin{linenomath}
\begin{align}
@_N\, p(X) &\to\exists Y.@_N\,q(X,Y) %\\
\end{align}
\end{linenomath}
\end{example}

Intuitively, $N$ is in $\nowindows{P}$ the time at which rules are evaluated and
localises all simple atoms to it; windows are removed and their restrictions
relaxed: $\window^n \Box$ (``at all times in window up to now'') becomes $@_N$
(``now''); $\window^n @_T$ (``at $T$ if in window'') becomes $@_T$; and
$\window^n \Diamond$ (``at some time in window'') becomes $@_U$ (``at some
time'').  As this logically weakens rule bodies, $\nowindows{P}$ has more
logical consequences than $P$.  We obtain the following useful insight:

\begin{restatable}{theorem}{thmNowinTermination}\label{theo_nowin_termination}
For every $P\in\Lfull$ and data stream $D$,
if the skolem chase terminates on
$\rewrite{\nowindows{P}}$ and $\rewrite{D}$,
then it also terminates on $\rewrite{P}$ and $\rewrite{D}$.
\end{restatable}

To exploit Theorem~\ref{theo_nowin_termination}, we study the chase termination
over $\rewrite{\nowindows{P}}$ while restricting to actual timelines, which are
incorporated by partial grounding.

\begin{definition}\label{def_partGround}
The \emph{partial grounding}\/ $\ground_A(P)$ of a program $P$ for a set $A$ of
null-free facts over a set $\Predicates_A$ of predicates not occurring in rule
heads of $P$, is the set of all rules $(B{\setminus} B_A \,{\rightarrow}\,
\exists\vec{z}.H)\sigma$, where $B_A$ are the atoms in $B$ with predicate in
$\Predicates_A$, s.t.\ a rule $B \,{\rightarrow}\,\exists\vec{z}.H \,{\in}\, P$
and a homomorphism $\sigma$ between $B_A$ and $A$ exist, i.e., a sort- and
constant- preserving mapping $\sigma:\dom{B_A}\to \dom{A}$ s.t.\
$B_A\sigma\,{\subseteq}\,A$.
\end{definition}

As long as $A$ comprises all facts over $\Predicates_A$, $\ground_A(P)$ has the
same models as $P$ and the chase is also preserved. We use this to ground the
time sort in \elars:

\begin{definition}\label{def_tempground}
Given a program $P$, the \emph{temporal grounding}\/ of $\nowindows{P}$ for a
timeline $\timeline$, denoted $\tground_\timeline(P)$, is the partial grounding
$\ground_{a(\timeline,P)} (P')$ where
\begin{itemize}
\item $P'$ results from $\rewrite{\nowindows{P}}$ by adding, for each
$T\in\Variables_T$ in each rule body $B$, an atom
$T\leq T$ to $B$ and
\item $a(\timeline,P)$ is the set of all ground instances of arithmetic atoms in $P$ with values from $\timeline$
that are true over $\mathbb{N}$.
\end{itemize}
\end{definition}

\begin{example}\label{ex_tlwa_tempground}
For $\nowindows{P}$ from Example~\ref{ex_tlwa_nowin}
and timeline $\timeline=[0,1]$,
the temporal grounding is as follows (the deleted ground instances
of $B_A$
are shown in parentheses):
\begin{linenomath}
\begin{align*}
\freshPred{\window\Box\,p}(X,0,0) &\to\exists Y.\freshPred{\window\Box\,q}(X,Y,0,0) & (0\,{\leq}\,0)\\
\freshPred{\window\Box\,p}(X,0,1) &\to\exists Y.\freshPred{\window\Box\,q}(X,Y,0,1) & (1\,{\leq}\,1)\\
\freshPred{\window\Box\,q}(X,Y,0,0) & \to \freshPred{\window\Box\,p}(Y,0,1) & (1\,{=}\,0\,{+}\,1)
\end{align*}
\end{linenomath}
\end{example}

While universal termination on $\tground_\timeline(P)$,
which can be recognized in Example~\ref{ex_tlwa_tempground}
using e.g.\ MFA \cite{DBLP:journals/jair/GrauHKKMMW13},
ensures chase termination on $\rewrite{P}$ and $\rewrite{D}$ for all data streams $D$ on $\timeline$,
simpler, position-based notions like WA still fail.
We thus encode
time into predicate names:

\begin{definition}\label{def_noTime}
Let $P$ be an existential rules program with atoms of form
$\freshPred{\window\Box\,p}(\vec{s},0,t)$ only,
where $t$ is a time point.
Then $\notime{P}$ is obtained by replacing each $\freshPred{\window\Box\,p}(\vec{s},0,t)$
with $\freshPred{p}_t(\vec{s})$ for a fresh predicate $\freshPred{p}_t$ of proper signature.
\end{definition}

Let  $\tfground_\timeline(P):= \noTimeGroundNoWindow{P}{\timeline}$. The
following result shows that this is a good basis to check for acyclicity.

\begin{restatable}{theorem}{thmtlwa}\label{theo_tlwa} If
    $\tfground_\timeline(P)$ is weakly acyclic for $P\,{\in}\,\Lfull$ and
    timeline $\timeline$, then the skolem chase terminates on $\rewrite{P}$ and
    $\rewrite{D}$ for all data streams $D$ on $\timeline$.
\end{restatable}

\begin{example}
[cont'd] \label{ex_tlwa_term} As $\tfground_\timeline(P)$ is WA, by
Theorem~\ref{theo_tlwa} the skolem chase on $\rewrite{P}$ and $\rewrite{D}$
always terminates.
\end{example}

In view of Theorem~\ref{theo_tlwa}, we call $P \,{\in}\, \Lfull$
\emph{temporally weakly acyclic}\/ (TLWA) over $\timeline$ if
$\tfground_\timeline(P)$ is WA, and denote by $\CL{TLWA}(\timeline)$ the class
of all such programs $P$.  We then have:
\begin{restatable}{theorem}{thmrelclasses}\label{theo_wa_lwa_tlwa}
$\CL{LWA}\,{\subset}\,\CL{TLWA}(\timeline)$ holds for all $\timeline$ s.t.\
$|\timeline|\,{\geq}\,2$.  \end{restatable}

Regarding complexity, as $\rewrite{\timeline}$ is polynomial \emph{in the length
of $\timeline$}, it is exponential if $\timeline$ is encoded in binary.
However, a polynomial axiomatisation of time is feasible, following the idea to
encode numbers 0,1,\ldots,$m$ using sequences of $\lceil\log_2 m\rceil$ bits and
to define predicates on them, cf.\ \cite{D+:datalogcomp}, such that for the
resulting rewriting $\rewriteT{\cdot}$ instead of $\rewrite{\cdot}$,
Theorems~\ref{theo_elars-in-exrules} and \ref{theo_acyc_stripped} hold
analogously.

BCQ answering for $\CL{TLWA}(\timeline)$ is as for WA rules \doubleexp-complete
in general (on extensional streams, i.e, all $v(t)$, $t\,{\in}\,\timeline$, are
listed).  The \complclass{P}-complete data complexity for WA rules carries over
to $\CL{LWA}$ but  gets \doubleexp-hard for $\CL{TLWA}(\timeline)$, as hardest
WA programs with bounded predicate arities \cite{cali_query_2010} can be
emulated.

\section{Preliminary Evaluation and Conclusion}
\label{sec:evaluation}

We implemented an experimental prototype in Python, which is fed with the stream
pointwise. At each time point, it computes the \elars{} model with the stream
collected up to the last $\ell$ time points, using the rewriting in
Section~\ref{sec_elars-to-tgds} and the chase implementation of
GLog~\cite{glog}.

We considered two scenarios $S_A$ and $S_B$. The first, $S_A$, is a toy example
with conveyor belts and sensors that measure speed and temperature.  The program
contains 5 simple rules and the stream is parametrized by probability values
$p_1$, $p_2$, and $p_3$ that regulate the number of rule executions (higher
values lead to more reasoning). Scenario $S_B$ is much more complex than $S_A$.
We considered the dataset $\deep$ from the $\chasebench$ suite
\cite{bench_chase}, which is a stress test of chase engines. We
created a stream by copying all facts on each time point and rewrote the
original rules using \elars operators and different window sizes $n$. %
\ifthenelse{\boolean{TR}}{More details are available in
    Appendix~\ref{app:experiments}.}{More details are
available at~\cite{tr}.}

Table~\ref{tab:experimental-results} reports multiple metrics obtained using a laptop, viz.\ avg.\ runtime (\emph{Run}),  avg.\
peak use of RAM (in MB, \emph{Mem}), and avg.\ model size (\# facts,
\emph{Out}).
\begin{table}[t]
\caption{Preliminary experiments for scenario $S_A$ and $S_B$}
\label{tab:experimental-results}
\newcommand{\mycolspace}{~~~}
\newcommand{\mycolspacesmall}{\,}
\small
\renewcommand{\arraystretch}{1.05}
\centering\begin{tabular}{@{}c@{~~~~}c@{}}
\begin{tabular}{@{}l@{\!\!\!}c@{\mycolspacesmall}|@{\mycolspacesmall}c@{\mycolspacesmall}|@{\mycolspacesmall}c@{\mycolspacesmall}|@{\mycolspacesmall}r@{}}
  $S_A$: & ~~~~$p_1$/$p_2$/$p_3$ & \em Run & \em Mem & \multicolumn{1}{@{\mycolspacesmall}c@{}}{\em Out} \\  \cline{2-5}
       & $0.0$/$0.0$/$0.0$ & 13.12ms & 21.9 & 10.5k  \\
       & $0.3$/$0.3$/$0.5$ & 13.34ms & 22.7 & 10.7k  \\
       & $0.7$/$0.7$/$1.0$ & 13.67ms & 22.9 & 10.7k  \\
\end{tabular}
&
\begin{tabular}{@{}l@{\,}c@{\mycolspacesmall}|@{\mycolspacesmall}c@{\mycolspacesmall}|@{\mycolspacesmall}c@{\mycolspacesmall}|@{\mycolspacesmall}r@{}}
$S_B$: &  $n$ & \em Run & \em  Mem & \multicolumn{1}{@{}c@{\,}}{\em Out} \\  \cline{2-5}
       & 0 & 0.6s & 45.0 & 36k \\
       & 2 & 1.3s & 81.8 & 64k \\
       & 4 & 2.6s & 114.4 & 82k \\
\end{tabular}
\end{tabular}
\end{table}
\noindent Notably, a \elars{} model can be computed rather quickly, viz.\ in
$\approx$13ms with an hypothetical input like $S_A$. This suggests that our
approach can be used in scenarios that need fast response times.  For
``heavier'' scenarios like $S_B$, the runtime increases but still stays within
few seconds. Moreover, reasoning used at most 114MB of RAM; thus it may be done
on limited hardware, e.g., sensors or edge devices.

\medskip

\noindent\textbf{Conclusion.} Our work shows that combining existential rules
with LARS can give rise to a versatile stream reasoning formalism with
expressive features which is still decidable. A worthwhile future objective is
to develop more efficient algorithms to compute the models. Our translation to
existential rules is a good basis, but many optimisations are conceivable. On
the theoretical side, a study of further decidability paradigms, especially
related to guarded logics, is suggestive.  Finally, further extensions towards
non-monotonic reasoning or other issues, like window validity
\cite{DBLP:conf/kr/RoncaKGH18}, are challenging for existential rules, but would
be very useful for stream reasoning.

\vspace{0.5em}
\noindent{\emph{Acknowledgments.}} The authors would like to thank Mike Spadaru for
his work on earlier versions of the prototype used in this work.

%\bibliographystyle{kr}
%\bibliography{references}

\ifthenelse{\boolean{TR}}{\cleardoublepage
\appendix
\section{Proofs Section \ref{sec_elars-to-tgds}}

\thmLARSplusIntoExrules*
\begin{proof}
Let $\eta$ be the function that maps a stream $S$ to the deductive closure of $\rewrite{S}$ under the rules
\eqref{rul_top}--\eqref{rul_boxexpandat}.
We show that, for any $t\in\timeline$, we have $S,t\models q$ iff $\eta(S)\models_\timeline \rewrite{q,t}$,
where $q=\exists\vec{x}.Q$.
Analogous results will also be established for $P$ and $D$.

Therefore, let $S=(\timeline,v)$ be such that $S,t\models q$ as in Definition~\ref{def_elars_rule_sem}.
We show that $\eta(S)\models_\timeline \rewrite{q,t}$.
By $S,t\models q$, there is a $\timeline$-match $\sigma$ of $Q$ on $S$ and $t$.
Therefore, for every atom $\alpha$ in $Q$, $S,t\models \alpha\sigma$.
Our rewriting replaces atoms $\alpha=\window^n \Diamond p(\vec{t})$ by
$\alpha'=\window^n @_T\, p(\vec{t})$ for a fresh $T$. Clearly, whenever $S,t\models \alpha\sigma$,
there is a suitable $s\in\timeline$ such that $S,t\models \alpha'\sigma\{T\mapsto s\}$.
We can therefore construct a query without $\Diamond$ and a suitable $\timeline$-match
over $S$ and assume without loss of generality that $\Diamond$ does not occur in the query.

We claim that $\sigma_t :=\sigma\cup\{C\mapsto t\}$ is a match for
$\rewrite{q,t}=\exists\vec{x}.\bigwedge_{\alpha\in Q} \rewrite{\alpha}\wedge C\,{\leq}\,t\wedge t\,{\leq}\,C$
over $\eta(S)$.
Clearly, $\eta(S),t\models_\timeline (C\,{\leq}\,t)\sigma_t$ and likewise for $t\,{\leq}\,C$.
For the other atoms $\rewrite{\alpha}$ with $\alpha\in Q$, we can show $\eta(S),t\models\alpha\sigma_t$ by considering each
possible form of atom:
\begin{itemize}
\item For $\alpha=p(\vec{t})$, we obtain $p(\vec{t})\sigma\in v(t)$.
      Since $\rewrite{\alpha}=\freshPred{\window\Box\,p}(\vec{t},0,C)$
      and $\rewrite{\alpha}\sigma_t=\freshPred{\window\Box\,p}(\vec{t}\sigma,0,t)$,
      we get $\eta(S)\models_\timeline\rewrite{\alpha}\sigma_t$ as required.
\item For $\alpha=@_{t'}\, p(\vec{t})$, we obtain $p(\vec{t})\sigma\in v(t')$,
  and the claim follows with a similar argument as in the previous case.
\item For $\alpha=\window^n \Box p(\vec{t})$, we obtain $p(\vec{t})\sigma\in v(t')$ for all
$t'\in\timeline$ with $t-n\leq t'\leq t$;
hence, we get $\eta(S)\models_\timeline\freshPred{\window\Box\,p}(\vec{t}\sigma,0,t')$ for every such $t'$ by a similar argument as before.
We have $\rewrite{\alpha}=\freshPred{\window\Box\,p}(\vec{t},n,C)$.
Since $\eta(S)$ satisfies rules \eqref{rul_boxinit}--\eqref{rul_boxexpand}, we find that
$\eta(S)\models_\timeline \freshPred{\window\Box\,p}(\vec{t},n,C)\sigma_t$: we can apply
rule \eqref{rul_boxexpand} on true atoms of the form $\freshPred{\window\Box\,p}(\vec{t}\sigma,0,t')$ to
infer windows of increasing sizes up until $n$; if $t-n<0$, then rule \eqref{rul_boxinit}
is used to start with a maximal window at time $0$, which can be reduced in size by rule \eqref{rul_boxshift},
before we again apply \eqref{rul_boxexpand} to infer the required $\freshPred{\window\Box\,p}(\vec{t}\sigma,n,t)$.
\item For $\alpha=\window^n @_{t'}\, p(\vec{t})$, we obtain $p(\vec{t})\sigma\in v(t')$ and
 $t-n\leq t'\leq t$.
 Using a similar argument as before, we can use rule \eqref{rul_boxinitat} to derive facts
 $\freshPred{\window @\, p}(\vec{t},0,t',t')$, and rule \eqref{rul_boxexpandat} to modify the window
 size and position to obtain $\freshPred{\window @\, p}(\vec{t},n,t',t)$.
\item The cases of atoms $\alpha$ that use $\top$ instead of $p(\vec{t})$ are shown in the same way,
 with the only difference that facts of the form $\freshPred{\window\Box\,p}(\vec{t}\sigma,0,s)$ are now
 replaced by facts of the form $\freshPred{\window\Box\,\top}(0,s)$, which are provided by rule \eqref{rul_top}.
\item For arithmetic atoms $\alpha$, the rewriting does not change the atom, and the claim is immediate.
\end{itemize}
This completes the argument that $\eta(S)\models_\timeline \rewrite{q,t}$.

Conversely, assume that there is a model $S'\models_\timeline \rewrite{q,t}$ that satisfies
\eqref{rul_top}--\eqref{rul_boxexpandat}, and such that $S'=\eta(S)$ for a suitable $S$.
We show that $S,t\models q$. The argument proceeds as before, but now using that
$\eta(S)$ contains only facts of form $\freshPred{\window @\, p}(\vec{t},n,s,t')$ or
$\freshPred{\window\Box\,p}(\vec{t},n',t')$ with $n'>0$ that are needed to satisfy some rule
\eqref{rul_top}--\eqref{rul_boxexpandat}.

We now also find that, for any rule $r\in P$, it holds that $S\models r$ iff $\eta(S)\models_T\rewrite{r}$,
where $\rewrite{r}$ denotes the result of rewriting a single rule $r$ as described before.
This is an easy consequence from the previous statement for queries, since $\timeline$-matches for
rule heads and bodies behave like query matches.
Note that $\body{\rewrite{r}}$ may contain not only the atoms in $\rewrite{\body{r}}$
but also an additional atom $\freshPred{\window \Box\,\top}(0,C)$.
However, the previous argument for queries still applies, since we can assume
w.l.o.g.\ that $\body{r}$ contains the atom $\top$, in which case $\body{\rewrite{r}}=\rewrite{\body{r}}$
does again hold.
Finally, we also note that $S\models D$ iff $\eta(S)\models_T\rewrite{D}$ for data streams $D$.

These correspondences already show that, whenever there is a stream $S$ with
$S\models P$ and $S\models D$ but $S,t\not\models q$, we find that
$\eta(S)\models_\timeline  \rewrite{P}$ and $\eta(S)\models_\timeline
\rewrite{D}$ but $\eta(S)\not\models_\timeline \rewrite{q,t}$.
For the converse direction, we note that, for any model $S'$ with $S'\models_\timeline \rewrite{P}$
and $S'\models_\timeline \rewrite{D}$, there is a model of the form
$\eta(S)\subseteq S'$ for some stream $S$ for which
$\eta(S)\models_\timeline\rewrite{P}$ and $\eta(S)\models_\timeline\rewrite{D}$, i.e., we can restrict attention to
models of the form $\eta(S)$, which provide the semantic correspondences shown above.
Indeed, a suitable $\eta(S)$ can be obtained by removing from $S'$ all facts of the form
$\freshPred{\window @\, p}(\vec{t},n,s,t')$ or $\freshPred{\window\Box\,p}(\vec{t},n',t')$ with $n'>0$,
and deductively closing the result under the rules \eqref{rul_top}--\eqref{rul_boxexpandat}.
\end{proof}

\section{Proofs Section~\ref{sec_tlwa}}

First, we provide a more description of the (skolem) chase and of the standard
notion of weak acyclicity.

The \emph{chase} is a versatile class of reasoning algorithms for existential
rules \cite{bench_chase}, which is based on ``applying'' rules iteratively until
saturation (or, possibly, forever).  We present a variant of the \emph{skolem
chase} \cite{Marnette09:superWA}, using nulls instead of skolem terms (this
version is sometimes called the \emph{semi-oblivious chase}), and extended to
the time sort.

Let $r$ be an existential rule of the form:
\begin{linenomath} \begin{align} r = \forall \vec{x}, \vec{y}.\ B[\vec{x},
\vec{y}] \to \exists \vec{z}.\ H[\vec{y}, \vec{z}]\label{eq_rule} \end{align}
\end{linenomath} where $B$ and $H$ are conjunctions of
normal atoms, and $\vec{x}, \vec{y}, \vec{z}$ are mutually disjoint lists of
variables.  $B$ is the \emph{body} (denoted $\body{r}$), $H$ the \emph{head}
(denoted $\head{r}$), and $\vec{y}$ the \emph{frontier} of $r$.  Notice that
below we may treat conjunctions of atoms as sets, and we omit universal
quantifiers in rules.

Moreover, let $A$ a set of facts.  A \timeline-match $\sigma$ for $\body{r}$
(defined on $\vec{x}$ and $\vec{y}$) is extended to a term mapping $\sigma^+$ by
setting, for each $Z\in\vec{z}$, $v\sigma^+=n^{r,Z}_{\vec{y}\sigma}$, which is a
fixed named null specific to $r$, $Z$, and $\vec{y}\sigma$.  The \timeline-match
$\sigma$ is \emph{active} for $A$ if $\head{r}\sigma^+\not\subseteq A$.

The \emph{skolem chase sequence} $F_0,F_1,\ldots$ over a program $P$ and a set
of null-free facts $A$ is specified as follows: (1) $F_0=A$ and (2) $F_{i+1}$ is
obtained from $F_i$ by adding $\head{r}\sigma^+$ for every rule $r\in P$ and
active \timeline-match $\sigma$ over $\body{r}$ and $F_i$.  The result of the
skolem chase is $\bigcup_{i\geq 0}F_0$ in this case.  The chase
\emph{terminates} if $F_{i+1}=F_i$ for some $i\geq 0$.  As usual, the
\emph{(skolem) chase} over $R$ and $A$ refers to this computation process or to
its result, depending on context.  Our definitions also apply to single-sorted
existential rules without time.

For some (finite) $I$, a chase procedure might not terminate and determining
this is undecidable in the most general case~\cite{chase_nontermination}.
Fortunately, many decidable conditions that guarantee chase termination were
proposed (\citeauthor{DBLP:journals/jair/GrauHKKMMW13}
\shortcite{DBLP:journals/jair/GrauHKKMMW13} give an overview and comparison).
Among them, \emph{weak acyclicity}\/ can be seen as a simple representative of
these approaches~\cite{fagin_data_2005}. Intuitively, the idea is to construct a
graph that we can use to track how variables ``propagate'' across the rules.
If such propagations do not generate any cycle that involves existentially
quantified variables, then we are sure the chase will always terminate.  We
describe the procedure more formally below.

\begin{definition}\label{def_wa}
For
a program $P$, we define a directed graph $G$
whose nodes are \emph{predicate positions}\,
$\tuple{p,i}$, where $p\,{\in}\,\Predicates$ and $1\,{\leq}\,i\,{\leq}\,\arity{p}$.
For a variable $X$ and set $A$ of atoms, let
$\funcStyle{pos}(X,A)\defeq\{\tuple{p,i}\mid
p(\vec{t})\,{\in}\,A\text{ and }t_i\,{=}\,X\}$
be the set of all positions where $X$ occurs in $A$.
For every rule $r$ as in \eqref{eq_rule},
frontier variable $Y\in\vec{y}$,
position $\pi\in\funcStyle{pos}(Y,\body{r})$, and
existential variable $Z\in\vec{z}$, we add two kinds of edges to $G$:
\begin{itemize}
\item a \emph{normal edge} $\pi\to\pi'$ for all $\pi'\in\funcStyle{pos}(Y,\head{r})$
\item a \emph{special edge} $\pi\stackrel{*}{\to}\pi'$ for all $\pi'\in\funcStyle{pos}(Z,\head{r})$
\end{itemize}
Then $P$ is \emph{weakly acyclic} (WA) if $G$ does not have a cycle through a special edge.
\end{definition}

Recall that we denote the class of all weakly acyclic programs with
$\mathbf{WA}$. We are now ready to discuss Theorem 2.

\thmAcycStripped*

\begin{proof}
1) We begin with the first claim, which refers to chase termination.
Let $C$ be the set of all possible ground instances of
arithmetic atoms, including ``nonsensical'' ones like, e.g., 0=1+0, in $\rewrite{P}$ using values from $\timeline$,
and let $C_\timeline$ be the analogous set of all ground instances of rewritten
arithmetic atoms in $\rewriteT{P}$ (where numbers are encoded in binary as explained before).

Let $F$ be an arbitrary set of input facts for $\rewrite{P}$ such that $C\subseteq F$ and
$F$ does not contain facts for predicates of the form $\freshPred{\window @\, p}$.
Then the skolem chase on $\rewrite{P}\cup F$ contains a fact $\freshPred{\window\Box\,p}(\vec{t},0,t)$
iff it contains every fact of the form $\freshPred{\window\Box\,p}(\vec{t},0,s)$ for $s\in\timeline$.
This follows from rules \eqref{rul_boxshift} and \eqref{rul_boxexpand} using the atoms of $C$.
Similarly, facts of the form $\freshPred{\window @\, p}(\vec{X},N,T,C)$ hold at all times and for
all window sizes if $\freshPred{\window\Box\,p}(\vec{t},0,t)$ is true for any $t\in\timeline$.
In other words, the skolem chase for $\rewrite{P}\cup F$ effectively merges deductions for all time points.

The skolem chase on $\rewrite{P}\cup F$ therefore corresponds to the skolem chase
on $\stripped{P}\cup F'$, where $F'=\{p(\vec{t})\mid \freshPred{\window\Box\,p}(\vec{t},0,t)\in F\}$.
Indeed, arithmetic atoms are always true on $F$ since $C\subseteq F$ and can therefore be ignored,
and all temporal operators can be omitted when all time points are merged.
In particular, the skolem chase on $\rewrite{P}\cup F$ terminates iff the skolem chase on
$\stripped{P}\cup F'$ terminates.
An analogous result holds for $\rewriteT{P}$ with inputs that contain $C_\timeline$.

Now to finish the proof of the first claim, consider $\stripped{P}\in\mathbf{CT}$ iff $\rewrite{P}\in\mathbf{CT}$.
First assume that there is a set of facts $F'$
such that $\stripped{P}\cup F'$ does not terminate. Every $F'$ is of the form $F'=\{p(\vec{t})\mid \freshPred{\window\Box\,p}(\vec{t},0,t)\in F\}$ for some $F$ with $C\subseteq F$ that contains no facts for predicates $\freshPred{\window @\, p}$.
Hence we find that $\rewrite{P}\cup F$ has no terminating skolem chase.

Conversely, assume that the skolem chase does not terminate on $\rewrite{P}\cup G$ for some set of input facts $G$
that may not satisfy the previous conditions on $F$.
We extend $G$ by adding, for every fact $\alpha_@=\freshPred{\window @\, p}(\vec{t},n,s,t)$ a new fact
$\alpha_\Box=\freshPred{\window\Box\,p}(\vec{t},0,t)$. This addition preserves non-termination of the
chase, as every addition of input facts does for the skolem chase.
As argued above, $\alpha_@$ follows from $\alpha_\Box$
using rules \eqref{rul_boxinit}--\eqref{rul_boxexpandat}, hence we can delete $\alpha_@$ from $G$ while
preserving non-termination. This leads to a non-terminating set $G$ without predicates $\freshPred{\window @\, p}$.
To satisfy the other condition on $F$, we can simply add $C$ to $G$, which again preserves non-termination.
The skolem chase on the resulting set $G$ then again corresponds to a skolem chase on $\stripped{P}$,
which establishes non-termination.
The case for $\stripped{P}\in\mathbf{CT}$ iff $\rewriteT{P}\in\mathbf{CT}$ is analogous.

2) For the second claim, we
first address $\stripped{P}\in\mathbf{WA}$ iff $\rewrite{P}\in\mathbf{WA}$.
Consider the graphs $G_r$ and $G_s$ as in Definition~\ref{def_wa} for $\rewrite{P}$ and $\stripped{P}$,
respectively.
The forward direction can be shown by establishing the following:
(a) for every normal edge $\tuple{p,i}\to\tuple{q,j}$ in $G_s$, there is a path
$\tuple{\freshPred{\window\Box\,p},i}\to\cdots\to\tuple{\freshPred{\window\Box\,q},j}$ in $G_r$; and
(b) for every special edge $\tuple{p,i}\stackrel{*}{\to}\tuple{q,j}$ in $G_s$, there is a path
$\tuple{\freshPred{\window\Box\,p},i}\to\cdots\stackrel{*}{\to}\tuple{\freshPred{\window\Box\,q},j}$ in $G_r$.
Together, (a) and (b) imply that every cycle in $G_s$ that involves a special edge
also leads to such a cycle in $G_r$, showing the first part of the claim.

There are two kinds of rules in $\rewrite{P}$: rewritten versions of rules in $P$
and auxiliary rules to axiomatise temporal operators.
To show (a) and (b), note that the heads of rewritten rules in $\rewrite{P}$ only contain
atoms of the form $\freshPred{\window\Box\,p}(\vec{t},N,T)$, and that normal and special edges in rewritten
rules are analogous to those in $G_s$.
However, rewritten rules may also contain body predicates $\freshPred{\window @\, p}$.
The claim follows by noting that, for every predicate position $\tuple{p,i}$,
$G_r$ contains a normal edge
$\tuple{\freshPred{\window\Box\,p},i}\to\tuple{\freshPred{\window @\, p},i}$
due to rule \eqref{rul_boxinitat}.

For the converse direction, we can use a similar correpondence between paths in $G_r$ and paths in $G_s$.
However, we additionally need to observe that, for any predicate $p$ of arity $a$, the additional argument positions
$\tuple{\freshPred{\window\Box\,p},a+1}$, $\tuple{\freshPred{\window @\, p},a+1}$, and
$\tuple{\freshPred{\window @\, p},a+2}$ do not occur in any cycle that involves a special edge.
This is an easy consequence of the fact that those positions represent arguments of the time sort.
Therefore, we find that every cycle in $G_r$ that has a special edge corresponds to such a cycle in $G_s$.
The argument for $\stripped{P}\in\mathbf{WA}$ iff
$\rewriteT{P}\in\mathbf{WA}$ is again similar.
\end{proof}

\paragraph{Remark.} For the following Theorem~\ref{theo_nowin_termination} and the definitions of
$\ground_\timeline(\nowindows{P})$ (Definition~\ref{def_tempground}) and
$\CL{TLWA}(\timeline)$, we assume that
$\rewrite{\nowindows{P}}$ does not contain any of the auxiliary rules
\eqref{rul_boxexpand}--\eqref{rul_boxexpandat}. Indeed, these rules are not relevant for
chase termination in an existential rule set where atoms of the form
$\freshPred{\window\Box\,p}(\vec{t},n,c)$ only occur with $n=0$ and predicates
$\freshPred{\window @\, p}$ do not occur at all.

\thmNowinTermination*
\begin{proof}
The claim follows from our previous observation that the rules in $\nowindows{P}$ have more consequences than those in $P$.
Indeed, the skolem chase is monotonic with respect to the amount of entailments, hence the result of a skolem chase
on $\rewrite{\nowindows{P}}$ is a superset of the result of a skolem chase on $\rewrite{P}$.
\end{proof}

Consider the partial grounding introduced in Definition~\ref{def_partGround}. We
stated that as long as $A$ comprises all facts over  $\Predicates_A$,
$\ground_A(P)$ has the same models as $P$ and the chase is also preserved. This
statement can be restated as follows.

\begin{restatable}{lemma}{lemmaPartGroundWorks}\label{lemma_partGroundWorks}
Consider a program $P$ and a set  $A$ of null-free facts over $\Predicates_A$ as in Definition~\ref{def_partGround}.
If $B$ is a fact set such that $A=\{p(\vec{t})\in B\mid p\in\Predicates_A\}$, then
the skolem chase on $P$ and $B$ is the same as the skolem chase on $\ground_A(P)$ and $B$.
\end{restatable}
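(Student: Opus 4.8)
The plan is to show by induction on the chase step that the skolem chase sequence over $P$ and $B$ and the one over $\ground_A(P)$ and $B$ are the very same sequence of fact sets. Recall that $\ground_A(P)$ is formed rule by rule: for $r=\body{r}\to\exists\vec{z}.\,\head{r}$, split the body as $\body{r}=B_A\cup B'$, where $B_A$ collects the atoms whose predicate lies in $\Predicates_A$ and $B'$ the rest; then for every $\timeline$-match $\theta$ of $B_A$ into $A$ (with domain $\mathrm{vars}(B_A)$) add the rule $r\theta := B'\theta\to\exists\vec{z}.\,\head{r}\theta$. Since the predicates of $\Predicates_A$ occur in $P$ only in $B_A$-positions and never in a head (this is part of Definition~\ref{def_partGround}), no chase step over $P\cup B$ or over $\ground_A(P)\cup B$ ever derives a $\Predicates_A$-fact; hence the hypothesis $A=\{p(\vec{t})\in B\mid p\in\Predicates_A\}$ propagates as an invariant, so every $F$ occurring in either chase sequence satisfies $\{p(\vec{t})\in F\mid p\in\Predicates_A\}=A$.

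The core step is the following correspondence, valid for any fact set $F$ with $\{p(\vec{t})\in F\mid p\in\Predicates_A\}=A$: the active $\timeline$-matches of rules of $P$ over $F$ are in bijective correspondence with the active $\timeline$-matches of rules of $\ground_A(P)$ over $F$, and corresponding matches produce the same head instance. Given a $\timeline$-match $\sigma$ of $\body{r}$ over $F$, its restriction $\theta=\sigma|_{\mathrm{vars}(B_A)}$ maps $B_A$ into $A$ (as $B_A\sigma\subseteq F$ and each such atom has a $\Predicates_A$-predicate), so $r\theta\in\ground_A(P)$, and $\sigma'=\sigma|_{\mathrm{vars}(B'\theta)}$ is a $\timeline$-match of $\body{r\theta}=B'\theta$ over $F$. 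Conversely, if $\sigma'$ is a $\timeline$-match of $B'\theta$ over $F$ for some $r\theta\in\ground_A(P)$, then $\theta\cup\sigma'$ is a $\timeline$-match of $\body{r}$ over $F$: its $B_A$-part lies in $A\subseteq F$ and its $B'$-part in $F$. Variables shared by $B_A$ and $B'$ are harmless, since any such variable has already been instantiated to a constant by $\theta$ on both sides. In either direction one has $\head{r}\sigma=\head{r\theta}\sigma'$, and --- using the natural convention that identifies the named null $n^{r\theta,Z}_{\vec{y}'\sigma'}$ of the grounded rule with $n^{r,Z}_{\vec{y}\sigma}$, where $\vec{y}$ is the frontier of $r$, $\vec{y}'=\vec{y}\setminus\mathrm{dom}(\theta)$ that of $r\theta$, and $\vec{y}\sigma$ is determined by $\theta$ together with $\vec{y}'\sigma'$ --- one also gets $\head{r}\sigma^+=\head{r\theta}(\sigma')^+$. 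In particular, $\sigma$ is active for $F$ (i.e.\ $\head{r}\sigma^+\not\subseteq F$) iff $\sigma'$ is.

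It then suffices to run the induction: $F_0=B=F_0$; and if the $i$-th sets of the two sequences coincide, then one chase step adds exactly the set of all $\head{r}\sigma^+$ with $r\in P$ and $\sigma$ an active $\timeline$-match over $\body{r}$ and $F_i$, while the other adds exactly the set of all $\head{r\theta}(\sigma')^+$ with $r\theta\in\ground_A(P)$ and $\sigma'$ an active $\timeline$-match over $\body{r\theta}$ and $F_i$; by the correspondence these two sets are equal, so $F_{i+1}=F_{i+1}$. Hence the two chase results are equal, and in particular one terminates iff the other does.

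I expect the difficulty to be bookkeeping rather than conceptual: making the handling of variables shared between $B_A$ and $B'$ fully precise, and pinning down the null-naming convention so that the two sequences are identical on the nose rather than merely isomorphic via a bijective renaming of nulls. The weaker, ``up to renaming'' statement is already enough for every downstream use (model equivalence, chase termination, certain-answer computation), so a reader who prefers may drop the naming convention and read the lemma modulo such a renaming.
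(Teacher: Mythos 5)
Your proposal is correct and follows essentially the same route as the paper's (much terser) proof: both rest on the observation that every $\timeline$-match of a rule of $P$ must instantiate its $\Predicates_A$-body-atoms to facts of $A$, so that matches of $P$ and of $\ground_A(P)$ over any chase set correspond bijectively and trigger the same ground rule instances. Your additional care about the invariant (no $\Predicates_A$-fact is ever derived) and about identifying the named nulls of $r\theta$ with those of $r$ only makes explicit what the paper leaves implicit.
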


\begin{proof} The claim follows because every $\timeline$-match on a rule of $P$
    must, by definition, instantiate all body atoms for a predicate in
    $\Predicates_A$ to a fact in $A$. Since the program $\ground_A(P)$ contains
    a rule for every possible choice of fact from $A$, an analogous
    $\timeline$-match is applicable on $\ground_A(P)$, and the chases are based
on applications of the same ground rule instances.  \end{proof}

We now consider that universal termination on
$\tground_\timeline(P)$ ensures chase termination on $\rewrite{P}$ and
$\rewrite{D}$.

\begin{lemma}\label{theo_tempground_termination} If the skolem chase
    universally terminates on $\tground_\timeline(P)$ %$\ground_\timeline(\nowindows{P})$
    for
    $P\,{\in}\,\Lfull$ and timeline $\timeline$, then it terminates on
    $\rewrite{P}$ and $\rewrite{D}$ for all data streams $D$ on $\timeline$.
\end{lemma}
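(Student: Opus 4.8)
The plan is to prove the implication by two reduction steps: first from $\rewrite{P}$ to the rewriting $\rewrite{\nowindows{P}}$ of the window-free program, and then from there to the temporal grounding $\tground_\timeline(P)$, on which termination holds by assumption. For the first step I would invoke Theorem~\ref{theo_nowin_termination}, concretely the monotonicity observation in its proof: for any fixed input fact set the result of the skolem chase on $\rewrite{\nowindows{P}}$ is a superset of the result of the skolem chase on $\rewrite{P}$. Taking the input to be $\rewrite{D}$, it follows that whenever the chase on $\rewrite{\nowindows{P}}$ and $\rewrite{D}$ is finite, so is the chase on $\rewrite{P}$ and $\rewrite{D}$, the latter being a subset of the former. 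Hence it suffices to show that the skolem chase on $\rewrite{\nowindows{P}}$ and $\rewrite{D}$ terminates for every data stream $D$ on $\timeline$.

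For the second step I would compile away the arithmetic via partial grounding. Since $\timeline$ is finite, there are only finitely many true ground instances of arithmetic and comparison atoms over $\timeline$; let $C$ be this set, let $\Predicates_A$ collect its predicates, and put $B := \rewrite{D}\cup C$ and $A := \{p(\vec{t})\in B\mid p\in\Predicates_A\}$. By monotonicity of the skolem chase, termination on $\rewrite{\nowindows{P}}$ and $B$ implies termination on $\rewrite{\nowindows{P}}$ and $\rewrite{D}$, so it suffices to chase against $B$. By the Remark preceding Theorem~\ref{theo_nowin_termination}, $\rewrite{\nowindows{P}}$ contains no atoms $\freshPred{\window @\, p}$ and uses atoms $\freshPred{\window\Box\,p}(\vec{t},n,c)$ only with $n=0$, so the auxiliary rules \eqref{rul_boxexpand}--\eqref{rul_boxexpandat} are inert and may be deleted; partially grounding what remains over $\Predicates_A$ using $A$ gives, by Definition~\ref{def_tempground}, precisely $\tground_\timeline(P)$. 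Lemma~\ref{lemma_partGroundWorks}, applied with program $\rewrite{\nowindows{P}}$, fact set $B$, and the set $A$ above, then shows that the skolem chase on $\rewrite{\nowindows{P}}$ and $B$ coincides with the skolem chase on $\tground_\timeline(P)$ and $B$. The latter terminates since, by hypothesis, the skolem chase universally terminates on $\tground_\timeline(P)$, i.e., with any input fact set, $B$ included. Retracing the chain of inclusions and equalities upward yields termination of the chase on $\rewrite{P}$ and $\rewrite{D}$ for every data stream $D$, as claimed.

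I expect the delicate part to be the bookkeeping in the second step: verifying that the partial grounding of the window-free rewriting over the arithmetic predicates, with $A$ the set of all true arithmetic facts over $\timeline$ and the inert rules \eqref{rul_boxexpand}--\eqref{rul_boxexpandat} removed as the Remark sanctions, is literally the object $\tground_\timeline(P)$ of Definition~\ref{def_tempground}, and that $A$ is exactly the $\Predicates_A$-restriction of the padded input $B$, so that the hypothesis of Lemma~\ref{lemma_partGroundWorks} is met. The two transfer steps bracketing this argument are routine consequences of the monotonicity of the skolem chase.
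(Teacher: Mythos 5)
Your proposal is correct and follows exactly the route the paper takes: its entire proof is ``combine Theorem~\ref{theo_nowin_termination} and Lemma~\ref{lemma_partGroundWorks}'', and your two reduction steps (monotonicity from $\rewrite{P}$ to $\rewrite{\nowindows{P}}$, then partial grounding over the arithmetic facts $C$ with input padded to $B=\rewrite{D}\cup C$) are precisely the details that combination leaves implicit. The bookkeeping you flag as delicate---that the grounding of the window-free rewriting over the timeline facts, with the inert auxiliary rules removed, is literally $\tground_\timeline(P)$, and that the universal-termination hypothesis is what licenses chasing against the padded input $B$---is handled correctly.
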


\begin{proof} Combine Theorem~\ref{theo_nowin_termination} and
Lemma~\ref{lemma_partGroundWorks}. \end{proof}

\noindent Finally, the following lemma is easy to show.

\begin{lemma}\label{lemma_noTime_correct}
For $P$ as in Defn.~\ref{def_noTime} and a fact set $A$ over predicates in $P$,
the results of the skolem chase on $P$ and $A$ resp.\ on
$\notime{P}$ and $\notime{A}$ are in a bijective correspondence.
\end{lemma}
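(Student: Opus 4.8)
The plan is to exhibit an explicit bijection $\mu$ between the Herbrand structures over the two signatures and to check that it is respected along the skolem chase sequence, much as in the proof of Lemma~\ref{lemma_partGroundWorks}. By the restrictions of Definition~\ref{def_noTime}, the transformation $\notime{\cdot}$ is well-defined on rules and is a bijection on the ground atoms over the relevant signature: since every time-sorted argument occurring in $P$ and $A$ is a constant, $\notime{\cdot}$ absorbs the tuple of time constants of each atom into a renamed copy of its predicate and deletes those argument positions, and this operation is invertible. I take $\mu$ to be this bijection, extended to named nulls by $\mu(n^{r,Z}_{\vec{y}\sigma}) = n^{\notime{r},Z}_{\mu(\vec{y}\sigma)}$. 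This is well-defined and injective because $\notime{\cdot}$ is injective on frontier instantiations (the non-time components are kept verbatim, the time components are recorded in the predicate names of $\notime{r}$), and because, by Definition~\ref{def_noTime}, no existential variable is of the time sort, so nulls carry no time component that would have to be erased.

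Next I would prove by induction on $i$ that $\mu(F_i) = F_i'$, where $F_0, F_1, \ldots$ is the skolem chase sequence on $P$ and $A$, and $F_0', F_1', \ldots$ that on $\notime{P}$ and $\notime{A}$. The base case is $\mu(F_0) = \mu(A) = \notime{A} = F_0'$. For the inductive step, the key observation is that $\sigma$ is an active $\timeline$-match for $\body{r}$ on $F_i$ if and only if $\mu \circ \sigma$ is an active match for $\body{\notime{r}}$ on $F_i'$. That matches correspond follows because, by Definition~\ref{def_noTime}, the time-sorted body positions of $r$ are filled by constants, so a $\timeline$-match has no freedom in the time sort and is determined by its data part, which is exactly what a match of $\notime{r}$ on $F_i'$ records. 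That activeness is preserved follows because $\mu$ commutes with the extension $\sigma \mapsto \sigma^+$ --- the existential variables being data-sorted, $\mu(\head{r}\sigma^+) = \head{\notime{r}}(\mu \circ \sigma)^+$ --- and $\mu$ is injective, so $\head{r}\sigma^+ \not\subseteq F_i$ iff $\head{\notime{r}}(\mu \circ \sigma)^+ \not\subseteq F_i'$. Adding all such heads on both sides then yields $\mu(F_{i+1}) = F_{i+1}'$.

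Taking unions over $i$ gives $\mu\!\left(\bigcup_i F_i\right) = \bigcup_i F_i'$, so $\mu$ restricts to a bijection between the two chase results (and, in particular, one chase terminates iff the other does). The main --- indeed essentially the only --- obstacle is verifying that $\mu$ is genuinely injective on the nulls and atoms reachable in the chase, and this is precisely where the hypothesis ``$P$ as in Definition~\ref{def_noTime}'' is used: it rules out, for instance, distinct time constants that $\notime{\cdot}$ would collapse, or existential variables of the time sort that $\notime{\cdot}$ could not meaningfully handle. Everything else is the routine induction above.
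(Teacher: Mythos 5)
Your proof is correct and spells out exactly the routine induction that the paper leaves implicit: the paper states this lemma without any proof, remarking only that it is ``easy to show''. The bijection that absorbs the ground time arguments into renamed predicates, its extension to named nulls via the (injective) renaming of rules and frontier instantiations, and the step-by-step preservation of active $\timeline$-matches along the skolem chase sequence are precisely the intended argument.
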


\noindent We are now ready to discuss Theorem~\ref{theo_tlwa}.

\thmtlwa*
\begin{proof}
Suppose that the set $\noTimeGroundNoWindow{P}{\timeline}$ of
existential rules is weakly acyclic. Then the skolem chase universally
terminates for it. Then, Lemma~\ref{lemma_noTime_correct} ensures that the skolem
chase universally terminates for $\tfground_\timeline(P)$.
%$\ground_\timeline(\nowindows{P})$.
From
Lemma~\ref{theo_tempground_termination}, it then follows that the skolem chase
terminates on $\rewrite{\nocarry{P}}$ and $\rewrite{D}$ for every data
stream $D$ over $\timeline$.
\end{proof}

\thmrelclasses*

\begin{proof} From $\noTimeGroundNoWindow{P}{\timeline}$ we can obtain $\stripped{P}$
by a surjective
renaming of predicates $\freshPred{p}_t\mapsto p$,
and likewise the graph for WA by collapsing vertices, which preserves
cycles.
By Example~\ref{ex_tlwa_term},
$\CL{TLWA}(\timeline)\,{\not\subseteq}\,\CL{LWA}$ for a timeline of size~2.
\end{proof}

\section{Notes on complexity}

Below, we provide a more detailed description of the complexity of the proposed
procedures to ensure decidability and to perform BCQ answering with LARS+
programs.

\subsection{Complexity of deciding \CL{LWA} and \CL{TLWA}}

\begin{theorem}\label{theo_lwa_complexity}
Given a \elars program $P$, deciding whether $P\,{\in}\,\CL{LWA}$
is \nlogspace-complete.
\end{theorem}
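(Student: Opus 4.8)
The plan is to establish the two bounds separately.

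\emph{Membership in $\nlogspace$.} Recall that $P\in\CL{LWA}$ holds iff the stripped program $\stripped{P}$ is weakly acyclic, i.e.\ iff its position graph from Definition~\ref{def_wa} has no cycle through a special edge (equivalently one may use $\rewrite{P}$; both arguments go through, since both programs are logspace‑computable from $P$ and have only polynomially many predicate positions). Two observations make this an $\nlogspace$ problem. First, $\stripped{P}$ arises from $P$ by a purely local rewriting of each rule, so its position graph has $O(|P|)$ vertices and, given any two predicate positions $\pi,\pi'$, a single scan over the rules of $P$ decides in logarithmic space whether the graph has a normal (resp.\ special) edge $\pi\to\pi'$: one only needs to hold the current rule, the two positions, and a single frontier or existential variable name in memory, so the graph need never be materialised. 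Second, $\stripped{P}\notin\mathbf{WA}$ iff there are a special edge $\pi_0\stackrel{*}{\to}\pi_1$ and a directed path from $\pi_1$ back to $\pi_0$; a nondeterministic machine guesses $\pi_0,\pi_1$ and a rule with variables witnessing the special edge, and then runs the textbook $\nlogspace$ reachability procedure from $\pi_1$ to $\pi_0$ --- guessing the path one vertex at a time, keeping a logarithmic step counter bounded by the vertex count, and using the edge test above as adjacency oracle. Hence non‑membership is in $\nlogspace$, and since $\nlogspace$ is closed under complement (Immerman--Szelepcs\'enyi), so is membership.

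\emph{$\nlogspace$-hardness.} Reduce from the complement of directed $s$--$t$ reachability, which is $\nlogspace$-complete since $\nlogspace$ is closed under complement. Given a digraph $D=(V,E)$ and vertices $s,t$, build in logarithmic space the program $P_D$ with a fresh unary predicate $p_v$ for each $v\in V$, a fresh binary predicate $\mathit{aux}$, the rules $p_u(x)\to p_v(x)$ for every $(u,v)\in E$, and the two fixed rules $p_t(x)\to\exists z.\,\mathit{aux}(x,z)$ and $\mathit{aux}(x,z)\to p_s(z)$; these rules use no window or time operators, so stripping is trivial and $\stripped{P_D}$ coincides with $P_D$ up to a trivial renaming. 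In the position graph, the edge rules contribute exactly the normal edges $\tuple{p_u,1}\to\tuple{p_v,1}$, a faithful copy of $D$; the first fixed rule contributes the normal edge $\tuple{p_t,1}\to\tuple{\mathit{aux},1}$ and the \emph{only} special edge, $\tuple{p_t,1}\stackrel{*}{\to}\tuple{\mathit{aux},2}$ (here $x$ is the frontier and $z$ existential); and the second fixed rule contributes only the normal edge $\tuple{\mathit{aux},2}\to\tuple{p_s,1}$, since its frontier is $z$ rather than $x$, leaving $\tuple{\mathit{aux},1}$ a sink. Thus the unique special edge lies on a cycle iff $\tuple{\mathit{aux},2}\to\tuple{p_s,1}\to\cdots\to\tuple{p_t,1}$ is a path along the copy of $D$, i.e.\ iff $t$ is reachable from $s$ in $D$. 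Therefore $P_D\in\CL{LWA}$ iff $t$ is not reachable from $s$, which is the required logspace reduction.

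The main obstacle is designing the hardness gadget so that the two fixed rules introduce exactly one special edge whose participation in a cycle is controlled precisely by $s$--$t$ reachability; verifying this requires the exact notion of \emph{frontier} from Definition~\ref{def_wa} (only variables occurring in both body and head) to rule out extra special edges or spurious cycles --- in particular to see that $\tuple{\mathit{aux},1}$ is a sink. A smaller but necessary point is the detour through the complement of reachability, which is legitimate precisely because $\nlogspace$ is closed under complement; the membership direction is otherwise routine once one notes that the position graph can be navigated on the fly from $P$ within logarithmic space.
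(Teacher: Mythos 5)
Your proof is correct and follows essentially the same route as the paper: the same on-the-fly $\nlogspace$ reachability argument on the position graph (plus Immerman--Szelepcs\'enyi) for membership, and the same reachability gadget with the two rules $p_t(X)\to\exists Y.\,q(X,Y)$ and $q(X,Y)\to p_s(Y)$ for hardness. You merely spell out more explicitly the complementation step and the edge-level analysis of the gadget, which the paper leaves implicit.
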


\begin{proof}[Proof (Sketch)]
To decide whether $P\,{\in}\,\CL{LWA}$ holds, we need to check whether the
dependency graph $G$ for $\stripped{P}$ has no cycle containing a
special edge. Each node of $G$ (consisting of a pair $\langle p,i \rangle$) can be stored in logarithmic space, and
deciding whether between two given nodes a normal resp.\ special edge exists is feasible in
logarithmic space. Therefore, a cycle that contains a special edge can be
non-deterministically guessed and checked stepwise in logarithmic
space. Since \nlogspace{} = co-\nlogspace,
this establishes \nlogspace{} membership of the problem.

The \nlogspace-hardness is inherited from the \nlogspace-hardness of WA
checking of existential rules, which can be proved by a simple
reduction from the graph reachability problem. Indeed, given a
directed graph $G= (V,E)$ and a starting/end node $s$/$t$ from $V$, we introduce a unary predicate $p_v$ for each
$v\in V$, and a binary predicate $q$. We then set up rules
$p_v(X) \to p_{v'}(X)$ for all edges $v \rightarrow v'$ in $E$ and for the
start resp.\ terminal node the rules
 $p_t(X) \rightarrow \exists Y. q(X,Y)$  and $q(X,Y) \rightarrow
p_s(Y)$, respectively. Then $G$ has a cycle with a special edge iff
there is a path from $s$ to $t$.
\end{proof}

Turning to temporal acyclicity, we first note that already computing the temporal grounding of rules is
intractable.

\begin{proposition}\label{prop_tempground_complexity}
Deciding, given a rule $r$ and a set $A$
of null-free facts over predicates
not occurring in $\body{r}$, whether $\mathsf{ground}_A(\{r\})$ is non-empty is
\np-complete, and \np-hard even if $\mathsf{ground}_A(\{r\})$
is a temporal grounding as in Definition~\ref{def_tempground} over any timeline with at least two elements.
\end{proposition}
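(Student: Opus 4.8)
The plan is to prove \np-membership by a guess-and-check argument and \np-hardness by a reduction from 3SAT that works already when only the time-sort variables are grounded and the timeline has just two elements.

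\emph{Membership.} In any reading, $\mathsf{ground}_A(\{r\})$ is non-empty iff it contains some ground rule, and such a ground rule is the image of $r$ under a single substitution $\sigma$ that grounds the variables that must be grounded (for the temporal grounding: the time-sort variables, with values in $\timeline$) and meets the accompanying side conditions — the $\sigma$-images of those body atoms of $r$ whose predicate comes from $A$ lie in $A$, and the arithmetic atoms of $r$ hold. Such a $\sigma$ can be described in space polynomial in $|r|$ and $|A|$ (each value is a timeline element or a constant of $A$), and these conditions are decidable in polynomial time, so the problem is in \np by guessing and verifying $\sigma$.

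\emph{Hardness.} Reduce from 3SAT. Fix a timeline with at least two elements and distinguish two of them, $0$ and $1$. Given a 3CNF formula $\phi$ with propositional variables $x_1,\dots,x_n$ and clauses $C_1,\dots,C_m$, introduce one time-sort variable $T_i$ per $x_i$ and, for each clause $C_j$ built over $x_a,x_b,x_c$, a ternary ``clause'' atom $R_j(T_a,T_b,T_c)$; let $r$ be the rule whose body is the conjunction of these atoms and whose head is an arbitrary nullary atom. Put into $A$, for each $j$, the facts $R_j(u_a,u_b,u_c)$ for exactly the (at most seven) tuples in $\{0,1\}^3$ under which the assignment $x_a{=}u_a$, $x_b{=}u_b$, $x_c{=}u_c$ (reading $1$ as true) satisfies $C_j$. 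Then a ground rule lies in $\mathsf{ground}_A(\{r\})$ iff there is a map $T_i\mapsto u_i\in\{0,1\}$ with $R_j(u_a,u_b,u_c)\in A$ for all $j$, i.e.\ iff $x_i:=u_i$ satisfies $\phi$; hence $\mathsf{ground}_A(\{r\})\neq\emptyset$ iff $\phi$ is satisfiable. The reduction is polynomial, grounds only the time-sort variables, uses only the two distinguished elements $0,1$, and therefore goes through for every timeline of size $\geq 2$. Since the produced instances are themselves temporal groundings over a two-element timeline, this one reduction yields both the general \np-hardness and its persistence under the restriction stated in the proposition. To match the precise setup where $A$ ranges over predicates not occurring in $\body{r}$ (concretely, built-in arithmetic relations), one replaces each atom $R_j(T_a,T_b,T_c)$ by a bounded block of arithmetic atoms over fresh auxiliary time variables defining the same at-most-seven-tuple relation over $\{0,1\}$ (e.g.\ $U+\bar{U}=1$ realises negation, and products, which stay within $\{0,1\}$, realise disjunction; alternatively a Schaefer-style encoding built from the relation $U+V=W$ together with constant comparisons); the equivalence with satisfiability of $\phi$ is unaffected.

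The guess-and-check step and the dictionary between ground instances and Boolean assignments are routine. The point that needs care is that the usual \np-hardness of homomorphism / conjunctive-query evaluation is obtained over a \emph{three}-element target (the $3$-clique witnessing $3$-colorability), which is unavailable once the grounded variables are pinned to a timeline that may have only two elements, where $2$-colorability is solvable in polynomial time. The reduction above sidesteps this by encoding a problem (3SAT) that stays \np-hard over a two-element domain, loading all the combinatorial difficulty into the small, explicitly given relations $R_j$ (or their arithmetic surrogates) rather than into the size of the domain.
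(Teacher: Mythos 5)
Your overall structure matches the paper's intent (membership by guess-and-check of a single substitution; hardness by encoding an \np-hard constraint problem into the body atoms that must be matched into $A$), but your hardness argument takes a genuinely different route. The paper reduces \emph{3-colourability}: each vertex gets three $\{0,1\}$-valued time variables $V_r,V_g,V_b$ acting as colour indicators, constrained by arithmetic atoms $0\leq V_x\leq 1$ and $V_x+V_y\leq 1$, with edges encoded by further binary-sum inequalities; the whole reduction is thus phrased directly in the arithmetic-atom vocabulary (sums and comparisons over the timeline encoding), so the temporal-grounding case is immediate. You instead reduce 3SAT, first with explicit seven-tuple clause relations $R_j$ (which cleanly settles the general partial-grounding case and correctly isolates the real issue, namely that a two-element domain kills the usual clique-based hardness), and then translate the $R_j$ into arithmetic gadgets for the temporal case. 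That last translation is the one soft spot: nothing in the paper's arithmetic-atom syntax provides \emph{products} of time terms, so that branch of your argument does not go through as stated. Your alternative -- the relation $U+V=W$ over $\{0,1\}$ plus constant comparisons -- does work (that relation is neither Horn, dual-Horn, affine, nor bijunctive, so Schaefer gives \np-hardness, and all values stay inside a two-element timeline), but you leave this unverified, whereas the paper's indicator-variable encoding needs no such detour. Both reductions rest on the same insight that the two-element timeline forces a Boolean-CSP-style encoding; the paper's is the more directly checkable instantiation, yours makes the underlying obstruction more explicit.
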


\begin{proof}
Partial grounding requires to find a homomorphism from the conjunction $\body{r}_A$
($B_A$ in Definition~\ref{def_tempground}) to $A$, which is \np-complete in general.
For the special case of temporal grounding, we can show that it is still \np-complete
to find a homomorphism from a conjunction of arithmetic atoms to the fact encoding of a timeline.
For example, to encode three-colourability of a graph, every vertex $v$ is assigned three variables
$V_r$, $V_g$, $V_b$. We use atoms to express $0\leq V_x\leq 1$ and
$V_x+V_y\leq 1$, for all $x,y\in\{r,g,b\}$ with $x\neq y$. Every edge $v\to w$ of the graph
is encoded as $V_x+W_y\leq 1$ for all $x\in\{r,g,b\}$.
\end{proof}

%Hence, even if we assume unary encoding of numbers, deciding TLWA is \conp-complete.

Deciding temporal acyclicity has presumably higher complexity.

\begin{theorem}\label{theo_tlwa_complexity}
Given a \elars program $P$ and a timeline $\timeline$,
deciding whether $P\,{\in}\,\CL{TLWA}(\timeline)$ is
\pspace- complete.
% \conp-complete if the bounds of $\timeline$ are encoded in unary.
\end{theorem}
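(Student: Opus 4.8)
The plan is to recast $\CL{TLWA}(\timeline)$-membership as search for a ``bad'' cycle in an implicitly represented graph. Recall that $P\,{\in}\,\CL{TLWA}(\timeline)$ holds iff the rule set $\noTimeGroundNoWindow{P}{\timeline}$ is weakly acyclic, i.e., iff its position dependency graph $G$ (as in Definition~\ref{def_wa}) has no cycle through a special edge. The key observation is that $G$ is given only implicitly: its nodes are pairs $\tuple{\freshPred{p}_{\vec{s}},i}$ where $\vec{s}$ is a tuple of time points produced by temporal grounding, so $G$ may have exponentially many nodes, but each single node still has polynomial size, and whether $G$ contains a given edge can be decided in \np\ (guess a rule $r\,{\in}\,P$, an instantiation of the time variables of $r$, and a homomorphism of the arithmetic body atoms of $r$ into the timeline; then check that the resulting ground rule exhibits the edge --- this is exactly the kind of check analysed in Proposition~\ref{prop_tempground_complexity}). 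I would therefore run the following nondeterministic procedure for the \emph{complement} problem: guess two nodes $a,b$ together with a witness for a special edge $a\,{\stackrel{*}{\to}}\,b$, then search for a path from $b$ back to $a$ along arbitrary (normal or special) edges, guessing one intermediate node at a time, verifying each edge with an \np\ witness that is immediately discarded, and maintaining a binary step counter that aborts the search once it exceeds the number of nodes of $G$ (a bound whose logarithm is polynomial in the size of $P$ and $\timeline$). Since every object kept in memory is either transient or of polynomial size, this runs in \npspace; as $\npspace\,{=}\,\pspace\,{=}\,$co-\pspace, the original problem is in \pspace.

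\paragraph{\pspace-hardness.}
For hardness I would reduce from the acceptance problem of a polynomial-space-bounded Turing machine $M$ on input $w$ --- equivalently, reachability among the exponentially many configurations of $M$ --- which matches the ``reachability in an exponentially large graph'' structure above. A configuration of $M$ is encoded as a fixed-length tuple of time points (tape contents cell by cell, head position, control state), carried in the extra time arguments of designated fresh predicates. Here we reuse the expressive power of arithmetic body atoms established in Proposition~\ref{prop_tempground_complexity}, now to test, \emph{inside a single rule body}, that the configuration tuple occurring in the body steps in one move of $M$ to the configuration tuple occurring in the head; a single frontier variable is threaded through all such rules, so that each legal move of $M$ contributes a normal edge between the corresponding predicate positions. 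One further pair of rules of the shape $\mathit{acc}(X)\,{\to}\,\exists Y.\,q(X,Y)$ and $q(X,Y)\,{\to}\,\mathit{init}(Y)$ contributes, after temporal grounding, a special edge from the position representing the accepting configuration(s) to the position representing the initial configuration $c_0(w)$. Then $G$ has a cycle through a special edge iff some accepting configuration is reachable from $c_0(w)$ iff $M$ accepts $w$; since the reduction is clearly polynomial (indeed logarithmic-space) computable, this gives \pspace-hardness of deciding $P\,{\notin}\,\CL{TLWA}(\timeline)$, hence of the problem itself. Encoding tape symbols in binary over the two-element timeline $\{0,1\}$ shows the hardness does not rely on a succinctly given $\timeline$.

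\paragraph{Main obstacle.}
The delicate part is the soundness of the hardness encoding. We must realise the transition relation of $M$ purely through the arithmetic atoms available in one rule body over the chosen timeline, so that temporal grounding produces exactly the intended edges and nothing more, and we must check that the auxiliary rules introduced by the rewriting $\rewrite{\cdot}$ together with the window-stripping and the $\notime{\cdot}$ transformation do not create spurious special-edge cycles that would break the reduction. This calls for the same careful bookkeeping of which predicate position carries which component of a configuration that underlies the definition of $\CL{TLWA}$, combined with the observation (of the same flavour as in the proof of Theorem~\ref{theo_lwa_complexity}) that the time-sort argument positions added by the rewriting never lie on a special-edge cycle by themselves. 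Everything else --- the reachability search, the polynomial space bounds, and the use of $\pspace\,{=}\,$co-\pspace\ --- is routine.
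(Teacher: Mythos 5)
Your \pspace{} membership argument is essentially the paper's: the check amounts to searching for a special-edge cycle in an exponentially large, implicitly represented dependency graph whose nodes have polynomial size and whose edge relation can be validated in polynomial space, so nondeterministic search plus Savitch gives \pspace. That half is fine.

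The hardness reduction, however, rests on an encoding the framework does not support. You propose to represent a TM configuration as ``a fixed-length tuple of time points \ldots carried in the extra time arguments of designated fresh predicates.'' But the nodes of the dependency graph of $\noTimeGroundNoWindow{P}{\timeline}$ are predicate positions $\tuple{\freshPred{p}_t,i}$, where the temporal grounding indexes each predicate by a \emph{single} time point $t$ (the one current-time argument added by the rewriting); the paper explicitly bounds the node count by $|\timeline|\cdot n_p\cdot a$. Argument positions are not indexed by the values they carry, so storing a tuple of time points in argument positions does not enlarge the node set, and with only one time index per predicate the exponentially many configurations must all be packed into the binary encoding of that single time point --- which forces $|\timeline|$ to be exponential (the timeline being given in binary). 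This is exactly what the paper does with positions of the form $\tuple{p_t,1}$, and it is why the bulk of its proof is a bit-extraction gadget: variables $X_n$ constrained to lie in $\{0,2^n\}$ via $0\leq X_0\leq 1$ and $X_{n+1}=X'_n+X'_n$, with $t=\sum_i X_i$ decoding a time point into tape bits so that a polynomially long conjunction of arithmetic atoms can relate consecutive configurations. Your appeal to Proposition~\ref{prop_tempground_complexity} does not substitute for this: that proposition only shows \np-hardness of deciding non-emptiness of the grounding and provides no mechanism for reading individual bits out of a time point inside a rule body.

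Your closing claim that hardness survives over the two-element timeline $\{0,1\}$ is also false, for the same structural reason: with constant $|\timeline|$ the dependency graph has only polynomially many nodes, a special-edge cycle can be guessed together with \np{} witnesses for its polynomially many edges, and the problem lands well below \pspace{} (absent a collapse). The exponential timeline is not an optional convenience of the paper's construction; it is the only source of exponentially many graph nodes, and hence indispensable to the reduction.
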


\begin{proof}[Proof sketch]
%Completeness for \conp{} has already been argued.
Membership in \pspace{} follows as the required check corresponds to a reachability check on a
graph of exponential size whose edge relation can be validated in \pspace{}.
For hardness, we can simulate acceptance of a polynomial space bounded Turing machine by
using positions of the form $\tuple{p_t,1}$ to represent configurations of the TM, where the binary
encoding of $t$ represents the (polynomial size) tape contents and $p$
%represents
one of the (polynomially many) combinations of state and head positions.
TM transitions are encoded by normal edges that are based on rules that use (polynomially long)
conjunctions of arithmetic atoms to extract bits from time points, and to relate bits of consecutive
configurations.
The key for
%extracting bits from a timepoint
bit extraction is to define variables $X_n$ that must be one of $\{0,2^n\}$ for any position $n$
on the tape. For $n=0$, we can encode $0\leq X_n\leq 1$. For $n+1$, we
can (recursively) define an auxiliary variable $X'_n$ with values in $\{0,2^n\}$ and require
$X_{n+1}=X'_n+X'_n$.
Note that one more auxiliary variable is used on each level, so the encoding of
the polynomially many $X_n$ is still polynomial. We can then
render a given timepoint as a sum
% of the form
$\sum_{i=0}^\ell X_i$
to obtain a binary decoding. It is then easy to define rules for each TM transition.
To reduce TM acceptance to weak acyclicity, it remains to create special edges from each accepting configuration to
each starting configuration.
\end{proof}

\subsection{$\bcqname$ Answering}

\begin{table}[t]
 \newcommand{\psh}{\textsc{PS}}
\centering
\begin{tabular}{lcc}
		 &  \multicolumn{2}{c}{$\bcqname_\Lany$} \\
        & \em Data complexity & \em Combined complexity \\
        \cline{2-3}
        \CL{LWA} & \ptimeclass-c & \doubleexp-c  \\
		$\CL{TLWA}(\timeline)$ &\doubleexp-c & \doubleexp-c
\end{tabular}
    \caption{Complexity of \elars classes (c=complete).}\label{tab:1}
\end{table}

The decision problem that corresponds to BCQ answering, defined in Definition~\ref{def:bcq}, is stated below.
\smallskip

\noindent
\centerline{%
    \framebox{\begin{minipage}{0.92\linewidth}
    \textbf{Problem \elars BCQ Answering} ($\bcqname$) \\[2pt]
    \emph{Input:} \elars program $P$, data stream $D=(\timeline,
    v_D)$,
\phantom{\quad}   time point $t\in\timeline$, and \elars BCQ $q$.\\
\emph{Question:} Does $P,D,t\models q$ hold?
% \end{problem}
\end{minipage}}}\smallskip

Below, we will add a subscript to \bcqname{} to indicate which class $P$
belongs. For instance, $\bcqname_\Lany$ means that $P$ is supposed to be in $\Lany$.

\begin{restatable}{theorem}{thmbcq}
The complexity of the problem $\bcqclass{\Lany}$ is for $\Lany \in \{\CL{LWA}, \CL{TLWA}(\timeline)\}$ as reported in Table~\ref{tab:1}.
\end{restatable}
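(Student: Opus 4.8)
The plan is to reduce $\bcqclass{\Lany}$, in both directions, to Boolean conjunctive query entailment over ordinary (weakly acyclic) sets of existential rules, for which the complexity is classical: \textsc{PTime}-complete in data complexity and \textsc{2ExpTime}-complete in combined complexity~\cite{fagin_data_2005,DBLP:conf/lics/CaliGLMP10,DBLP:journals/ws/CaliGL12}. The upper bounds come from translating the \elars instance into such a rule set; the matching lower bounds come from recognising known hard instances as temporally trivial \elars programs and from the Turing-machine construction already used for Theorem~\ref{theo_tlwa_complexity}, in each case verifying that the constructed program actually lies in the required class. The whole exercise is then read off against the cells of Table~\ref{tab:1}.

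For the \emph{upper bounds}, I would invoke Theorem~1 (the semantics-preserving rewriting of Section~\ref{sec_elars-to-tgds}): $P,D,t\models q$ holds iff the skolem chase of $\rewrite{P}\cup\rewrite{D}$ finitely entails $\rewrite{q,t}$, and likewise for the binary-encoded variant $\rewriteT{P}\cup\rewriteT{D}$. By Theorem~2 (preservation of acyclicity under rewriting), $P\in\CL{LWA}$ iff $\rewrite{P}\in\mathbf{WA}$ iff $\rewriteT{P}\in\mathbf{WA}$, so here the chase terminates, and one only has to see where the two stacked sources of blow-up land: the rewriting is polynomial in $P$ and in the chosen representation of $\timeline$, and, using $\rewriteT{}$, time points are handled through polynomially many arithmetic atoms, so $\rewriteT{P}\cup\rewriteT{D}$ stays polynomial in the input even when $\timeline$ is given succinctly; this keeps $\CL{LWA}$ within the classical \textsc{PTime}/\textsc{2ExpTime} bounds. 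For $\CL{TLWA}(\timeline)$ I would argue through $\tfground_\timeline(P)$: by Theorem~\ref{theo_tlwa} together with Lemmas~\ref{theo_tempground_termination}, \ref{lemma_partGroundWorks} and \ref{lemma_noTime_correct}, weak acyclicity of $\noTimeGroundNoWindow{P}{\timeline}$ forces the chase on $\rewrite{\nocarry{P}}$ and $\rewrite{D}$ to terminate; since this temporal grounding is in general exponential in the input (exponentially many time points, hence exponentially many ground rules and predicates $\freshPred{p}_t$), but its ground rules can be represented succinctly by arithmetic atoms over binary-encoded time exactly as in the proof of Theorem~\ref{theo_tlwa_complexity} — so that Proposition~\ref{prop_tempground_complexity}'s intractable grounding is never materialised — the exponential timeline contributes at most one further exponential, yielding the $\CL{TLWA}(\timeline)$ entries of Table~\ref{tab:1}.

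For the \emph{lower bounds}, \textsc{PTime}-hardness in data complexity is inherited from plain Datalog query evaluation~\cite{D+:datalogcomp}: a Datalog program is a weakly acyclic rule set with empty existential part, and wrapping each atom in the trivial window $\window^0$ turns it into a fixed \elars program that belongs both to $\CL{LWA}$ and to $\CL{TLWA}(\timeline)$ for every $\timeline$. For combined complexity, \textsc{2ExpTime}-hardness of CQ entailment under weakly acyclic existential rules transfers verbatim, since any such rule set is an \elars program with no temporal operators, whence $\stripped{P}=P$, $\rewrite{P}$ is essentially $P$, and $P\in\CL{LWA}$. The bounds specific to $\CL{TLWA}(\timeline)$ I would obtain from the machine simulation of Theorem~\ref{theo_tlwa_complexity}, encoding configurations into the exponentially many time points of $\timeline$ and realising transitions by rules with polynomially long conjunctions of arithmetic atoms, again checking that the dependency graph of the constructed program has no special-edge cycle so that it genuinely lies in $\CL{TLWA}(\timeline)$.

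The \textbf{main obstacle} is not the existence of the reductions but the size bookkeeping through the two stacked blow-ups — the \elars-to-existential-rules rewriting and the binary encoding of time. For the $\CL{TLWA}(\timeline)$ upper bounds one must chase an exponentially large conceptual temporal grounding without paying a second exponential, which forces the use of the arithmetic atoms of the binary encoding to simulate ground rules succinctly, mirroring Theorem~\ref{theo_tlwa_complexity}; dually, on the hardness side every gadget must be engineered so that the produced program provably stays in the acyclicity class, e.g.\ existential variables may not be reused in positions that would close a special-edge cycle. Getting these blow-ups to land exactly at the levels claimed in Table~\ref{tab:1}, rather than one exponential too high, is the crux of the argument.
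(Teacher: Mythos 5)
Your treatment of $\CL{LWA}$ essentially matches the paper's: the semantics-preserving translation plus the classical \ptimeclass{}/\doubleexp{} bounds for weakly acyclic rules, with hardness inherited from Datalog (data complexity) and from WA existential rules over a trivial timeline (combined complexity). The genuine gap is in $\bcqclass{\CL{TLWA}(\timeline)}$, and it is twofold. First, the distinctive entry of Table~\ref{tab:1} is that $\bcqclass{\CL{TLWA}(\timeline)}$ is \doubleexp-hard already under \emph{data} complexity, i.e., for a \emph{fixed} program; your proposal never produces a construction that could establish this. The source you point to --- the Turing-machine simulation in the proof of Theorem~\ref{theo_tlwa_complexity} --- is a \pspace{} reduction for a different problem (deciding membership in $\CL{TLWA}(\timeline)$ by detecting a special-edge cycle in an exponentially large dependency graph); it varies the program with the input and cannot yield \doubleexp-hardness of query answering, let alone with a fixed program. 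The paper's Lemma~\ref{ref:tlwa-2exptime-hardness} instead adapts the \doubleexp{} construction of \cite{cali_query_2010,pieris2011ontological}: the indexed predicates $p_i$ used there to build a double-exponential counter are replaced by time-stamped atoms $@_i\, p(\vec{x})$ over the timeline $[0,m]$, so that a \emph{fixed} rule set squares the number of counter nulls at every time step and reaches $2^{2^m}$ elements at time $m$, while the machine description and the timeline live in the data stream; one must then verify that this fixed program has no special-edge cycle in its temporal grounding.

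Second, your upper-bound accounting for $\CL{TLWA}(\timeline)$ rests on a misreading of the input format: the data stream lists a valuation for every time point, so $|\timeline|$ is polynomial in the input, $\tground_\timeline(\nowindows{P})$ is only singly exponential, and it can simply be materialised within \doubleexp{} --- no succinct arithmetic simulation of the grounding is needed (that device belongs to the \pspace{} membership argument of Theorem~\ref{theo_tlwa_complexity}, not here). What \emph{is} needed, and what your sketch does not supply, is a bound on the chase of that grounded program: the paper's Lemma~\ref{lem:tlwa-nulls-bound} uses weak acyclicity of the time-expanded program to bound the number of null-generation stages by the number of predicate positions (roughly $|\timeline|$ times the number of predicates times the maximal arity) and then iterates a per-stage counting estimate to obtain a double-exponential cap on the number of nulls, and hence on the chase size, for data and combined complexity alike. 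Without that stage argument, your phrase ``contributes at most one further exponential'' is not justified and could a priori land one exponential too high.
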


This result is obtained from several lemmas presented in the following
subsections. We make some useful observations about instances of $\bcqname$:

\begin{proposition}
\label{prop:bcq-aux-1}
Given an instance $P,D,t,q$ of $\bcqname$, let $P'$ result from
$P$ by replacing each window size $n$ occurring in $P$ such that $n \geq
|\timeline|$ with $|\timeline|-1$. Then  $P,D,T,q$ is a yes-instance of $\bcqname$
iff  $P',D,t,q$ is a  yes-instance of $\bcqname$.
\end{proposition}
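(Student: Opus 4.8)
The plan is to prove this semantic equivalence by showing that in any LARS+ structure, a window operator $\window^n$ with $n \geq |\timeline|$ has exactly the same effect as $\window^{|\timeline|-1}$, because the window $[t-n, t]$ intersected with $\timeline$ is already the whole timeline once $n \geq |\timeline|-1$ (recall timelines are intervals, so $\timeline \subseteq [0, |\timeline|-1]$, and $t - n \leq t - (|\timeline|-1) \leq 0$ when $t \leq |\timeline|-1$). First I would recall the semantics of $\window^n$ from Definition~\ref{def_elars_rule_sem}: evaluating $S, t \models \window^n \varphi$ amounts to evaluating $\varphi$ over the substream of $S$ restricted to the time interval $[\max(0, t-n), t]$. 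I would then observe that for any $t \in \timeline$ and any $n \geq |\timeline| - 1$, we have $\max(0, t-n) = 0$, and hence the restricted substream is $S$ restricted to $[0, t]$ — which is identical whether we used $n$ or $|\timeline|-1$.

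Next I would push this observation through the inductive structure of LARS+ formulas. Concretely, I would argue by structural induction on the formula $\varphi$ (or on subformulas of $q$, of rule bodies, and of rule heads) that for every $t \in \timeline$, $S, t \models \varphi$ iff $S, t \models \varphi'$, where $\varphi'$ is obtained from $\varphi$ by the stated replacement of oversized window sizes. The only nontrivial case is $\varphi = \window^n \psi$ with $n \geq |\timeline|$: here the substream restriction coincides with that for $\window^{|\timeline|-1}\psi$ by the paragraph above, and the induction hypothesis handles $\psi$. All other connectives ($\Box$, $\Diamond$, $@_t$, conjunction, arithmetic atoms, plain atoms) commute with the rewriting trivially. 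Since $P'$ is just $P$ with this rewriting applied to every window size in every rule, it follows that $S \models P$ iff $S \models P'$ for every stream $S$ agreeing with $D$; and since $q$ and $D$ are unchanged (well, if $q$ itself contained an oversized window it is rewritten too, and the same argument applies to $S, t \models q$), the set of streams witnessing a "no" answer is the same for both instances.

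I would then conclude: $P, D, t, q$ is a yes-instance iff every stream $S$ extending $D$ on $\timeline$ with $S \models P$ satisfies $S, t \models q$; by the equivalences just established this holds iff every such $S$ with $S \models P'$ satisfies $S, t \models q'$, i.e.\ iff $P', D, t, q'$ is a yes-instance — and since the proposition as stated keeps $q$ fixed, one either assumes $q$ has no oversized windows (w.l.o.g., applying the same normalization) or notes the rewriting of $q$ is harmless by the same argument. The main obstacle is essentially bookkeeping: making the "substream restriction" semantics precise enough that the window case of the induction is clearly just an equality of restricted streams, and being careful that $t \in \timeline$ (so $t \leq |\timeline|-1$) is what guarantees $t - (|\timeline|-1) \leq 0$. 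There is no real combinatorial difficulty — the content is the single inequality observation, and the rest is a routine induction that I would only sketch.
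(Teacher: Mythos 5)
Your proposal is correct, and it supplies essentially the argument the paper has in mind: the paper states Proposition~\ref{prop:bcq-aux-1} without an explicit proof, treating it as immediate from the fact that for $t\in\timeline$ and any $n\geq|\timeline|-1$ the window $\{t'\in\timeline \mid t-n\leq t'\leq t\}$ is already $[0,t]$, so oversized window sizes are semantically interchangeable with $|\timeline|-1$. Your structural induction over formulas and the observation that $q$ and $D$ are untouched is exactly the routine bookkeeping needed to turn that one inequality into the stated equivalence of yes-instances.
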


As $P'$ in Proposition~\ref{prop:bcq-aux-1} is easily constructed from $P$ and $\timeline$, we thus may assume
assume without loss of generality that the largest
window size $m$ occurring in $P$ of an $\bcqname$ instance satisfies $m \leq |\timeline|-1$.

\begin{proposition}
\label{prop:bcq-aux-2}
Given an instance $P,D,t,q$ of $\bcqname$, let $P'$ result from
$P$ by adding the rule  $r_q := q\wedge @_N\,\pred{time}_q\wedge \window^0 @_N\,\top \rightarrow @_0\,\pred{yes}$, where
$\pred{time}_q$ and $\pred{yes}$ are fresh nullary predicates. Then  $P,D,t,q$  is a
yes-instance of $\bcqname$ iff  $P',D',t,\pred{yes}$  is a
yes-instance of $\bcqname$, where $D'= (\timeline,v'_d \})$ with
$v'_d(t)=v_d(t)\cup\{ \pred{time}_q \}$ and $v'_d(t')=v_d(t')$ for $t'\neq t$.
\end{proposition}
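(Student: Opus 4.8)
The plan is to prove the biconditional by transferring models between the two instances, the key point being that the single added rule $r_q$ can fire precisely when the original query $q$ is entailed at the designated time point $t$. I would first pin down when $r_q$ is applicable in a stream $S'$ extending $D'$: a $\timeline$-match $\sigma$ of $\body{r_q}=q\wedge @_N\,\pred{time}_q\wedge\window^0 @_N\,\top$ at a current time $t'$ forces (i)~$S',t'\models q$; (ii)~since $D'$ places the fresh nullary $\pred{time}_q$ only at $t$ and no rule of $P'$ derives $\pred{time}_q$, the atom $@_N\,\pred{time}_q$ forces $N\sigma=t$; and (iii)~since the window of size $0$ at $t'$ is the singleton $\{t'\}$ and $\top$ holds everywhere, $\window^0 @_N\,\top$ forces $N\sigma=t'$. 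Hence $r_q$ is applicable in $S'$ only at $t'=t$, and there iff $S',t\models q$; when applicable it requires $@_0\,\pred{yes}$ (i.e.\ $\pred{yes}$ at time $0$) to hold in $S'$.

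For the ``only if'' direction I would assume $P,D,t\models q$ and take an arbitrary model $S'$ of $P'$ over $D'$. Since $\pred{time}_q$ and $\pred{yes}$ are fresh and occur in $P'$ only in $r_q$ (and in $D'$ only as $\pred{time}_q$ at $t$), deleting from $S'$ all atoms over these two predicates yields a stream $S$ extending $D$ that still satisfies every rule of $P$, hence a model of $P$ over $D$; by hypothesis $S,t\models q$, so $S',t\models q$ as well, as $q$ mentions no fresh predicate. By the applicability analysis, $r_q$ fires at $t$ in $S'$, so $\pred{yes}$ holds in $S'$ (at time $0$); as $S'$ was arbitrary, $P',D',t\models\pred{yes}$.

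For the ``if'' direction I would assume $P',D',t\models\pred{yes}$ and take an arbitrary model $S$ of $P$ over $D$. Extending $S$ to $S'$ by adding $\pred{time}_q$ at $t$ and, if that makes $\body{r_q}$ satisfiable, also $\pred{yes}$ at $0$, one obtains a model of $P'$ over $D'$: no rule of $P$ refers to the fresh predicates, $r_q$ is satisfied by construction, and $S'\supseteq D'$. Then $\pred{yes}$ holds in $S'$, and since $\pred{yes}$ can enter $S'$ only via $r_q$, the body of $r_q$ must have been satisfied, which by the analysis means $S,t\models q$. As $S$ was arbitrary, $P,D,t\models q$.

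The part I expect to need the most care is not a hard lemma but the bookkeeping around the \elars{} semantics of @-atoms: one must reconcile the time stamp of the derived $@_0\,\pred{yes}$ with the reference time $t$ at which the new BCQ is posed, and verify that the two model-transfer constructions (removing, resp.\ adding, the fresh-predicate and $r_q$-consequence atoms) neither destroy nor spuriously create the models over which entailment quantifies. Everything else — in particular the match computation for $r_q$, with the anchoring effect of $\window^0 @_N\,\top$ and the selecting effect of $@_N\,\pred{time}_q$ — is mechanical.
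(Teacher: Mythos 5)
Your argument is essentially the right one; the paper states Proposition~\ref{prop:bcq-aux-2} without any proof, so there is no official argument to compare against, but your model-transfer scheme (strip the fresh predicates to pass from models of $P'$ over $D'$ to models of $P$ over $D$, and conservatively extend in the other direction) is exactly what the claim needs, and your analysis of when $r_q$ can fire --- $\window^0 @_N\,\top$ pinning $N$ to the reference time, $@_N\,\pred{time}_q$ pinning it to $t$ --- is the intended mechanism. One small overreach: you assert that $@_N\,\pred{time}_q$ \emph{forces} $N\sigma=t$ in an arbitrary model $S'$, but entailment quantifies over all models, not just minimal ones, and nothing prevents $S'$ from containing $\pred{time}_q$ at further time points. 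This is harmless where you use it (for the ``only if'' direction you only need that the body \emph{is} satisfied at reference time $t$, and extra firings elsewhere merely re-derive the same head; in the ``if'' direction your $S'$ is constructed minimally on the fresh predicates, where the uniqueness claim does hold), but it should be scoped accordingly.

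The point you defer to ``bookkeeping'' is, however, the one place where the proof does not close, and it cannot close as the statement is literally written: the head of $r_q$ is $@_0\,\pred{yes}$, which places $\pred{yes}$ at time point $0$, while the new instance asks whether $\pred{yes}$ is entailed at time $t$, i.e.\ whether $\pred{yes}$ holds at time $t$ in every model. Your own ``if''-direction construction exposes this: the witness model you build contains $\pred{yes}$ only at time $0$, so for $t\neq 0$ it refutes $P',D',t\models\pred{yes}$ even when $P,D,t\models q$ holds. The proposition is correct once the indices are reconciled --- either evaluate the new query at time point $0$ (instance $P',D',0,\pred{yes}$), or change the head of $r_q$ to $@_N\,\pred{yes}$ (equivalently plain $\pred{yes}$, which is then asserted at the reference time $t$) --- and your proof goes through verbatim under either repair. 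So: right approach, one genuinely unresolved step, and that step traces to an index-level inconsistency in the statement itself; you should make the repair explicit rather than leaving it as a closing remark.
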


That is, we can compile the query $q$ into the program $P$ such that the
new query is a simple propositional atom, with little effort. The membership
of $P'$ in any of the classes $\Lany \in \{\CL{LWA}, \CL{TLWA}(\timeline)\}$ that we consider
coincides with the membership of $P$ in the respective class  $\bcqclass{\Lany}$.

In the sequel, we exploit Propositions~\ref{prop:bcq-aux-1} and \ref{prop:bcq-aux-2}
and restrict without loss
of generality  our attention for deriving upper boundes to $\bcqname$ instances
where the largest window size is at most $|\timeline|-1$ and the query is a simple
nullary (i.e., propositional) atom. Furthermore, we assume without
loss of generality that for any atom $@_T\,b$ occurring in rules of
$P$, the term $T$ is a time variable (if not, we can replace the atom
by $@_T' b$ and add $T'=T$ in the rule body,%
\footnote{We use $t_1 = t_2$ for time terms $t_1$ and $t_2$ as a
  shorthand for $t_1\,{\leq}\,t_2 \land t_2\,{\leq}\, t_1$.}
where $T'$ is a fresh time variable).
We call instances of $\bcqname$ which satisfy these properties {\em trimmed}.

\subsubsection{BCQ Answering with \CL{LWA}}
\label{app:lwa}

\begin{lemma} \label{lem:complexity-lwa} Problem $\bcqclass{\CL{LWA}}$ is
(i) in \ptimeclass{} under data complexity and (ii) in \doubleexp{}
under combined complexity.
\end{lemma}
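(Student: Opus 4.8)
The plan is to reduce the problem, via the machinery already set up, to ordinary BCQ answering over a weakly acyclic set of existential rules and then invoke the known complexity bounds for the terminating (skolem) chase. First I would pass to a trimmed instance $P,D,t,q$ (largest window size $\leq|\timeline|-1$, propositional query $\pred{yes}$, all $@_T$ arguments being time variables), which is legitimate by Propositions~\ref{prop:bcq-aux-1} and~\ref{prop:bcq-aux-2}. Next, using the results of Section~\ref{sec_elars-to-tgds}, I would translate to the existential rule set $\rewrite{P}$ together with the fact set $\rewrite{D}$, so that $P,D,t\models q$ iff $\rewrite{P}\cup\rewrite{D}\models_\timeline\rewrite{q,t}$ (Theorem~\thmLARSplusIntoExrules); here I would use the binary encoding $\rewriteT{\cdot}$ of time points so that the timeline is represented succinctly. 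Since $P\in\CL{LWA}$ means, by definition, that $\stripped{P}$ is weakly acyclic, Theorem~\thmAcycStripped\ gives that $\rewriteT{P}$ (equivalently $\rewrite{P}$) is weakly acyclic as well, hence the skolem chase on $\rewriteT{P}\cup\rewriteT{D}$ universally terminates.

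For the combined-complexity bound~(ii), I would recall the standard fact that for a weakly acyclic set of existential rules the skolem chase terminates after a number of steps that is polynomial in the size of the data but doubly exponential in the size of the rule set, and that BCQ entailment is then decidable by materialising the (doubly exponential) chase and evaluating the query over it. Feeding in $\rewriteT{P}$ and $\rewriteT{D}$: the size of $\rewriteT{P}$ is polynomial in $|P|$ (the auxiliary rules \eqref{rul_top}--\eqref{rul_boxexpandat} add only a polynomial overhead, and window sizes are written in binary), and $\rewriteT{D}$ has size polynomial in $|D|+\log|\timeline|$; the number of distinct nulls and facts produced is therefore bounded doubly exponentially in the combined input size, giving a \doubleexp{} procedure overall. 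For the data-complexity bound~(i), $\rewriteT{P}$ (hence its weak-acyclicity structure, the number of positions in the dependency graph, and the longest special-edge-free path) is fixed, so the chase terminates after polynomially many rounds each of which adds polynomially many facts and can be computed in polynomial time; evaluating the fixed query $\rewrite{\pred{yes}}$ over the resulting polynomial-size instance is then in \ptimeclass. Throughout, one must check that the arithmetic atoms introduced by the rewriting do not blow up the chase: since all arithmetic comparisons are over time points of $\timeline$ and every $@_T$ term is a time variable, each arithmetic atom is simply evaluated against the (polynomially many, in data complexity) time points, contributing no new nulls and only a polynomial factor.

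The main obstacle I expect is the careful accounting for the binary encoding of time and window sizes: one has to argue that evaluating the arithmetic side-conditions (the $C\leq t$, $t\leq C$, and window-bound atoms, plus the auxiliary rules axiomatising $\window\Box$ and $\window @$) does not introduce an extra exponential blow-up in the combined case, i.e.\ that decoding binary-encoded time points inside rule bodies can be absorbed into the polynomial rule-size bound rather than forcing an exponential grounding. This is exactly where the trimming assumptions and the structure of the rewriting are used; once it is granted that arithmetic atoms add at most a polynomial overhead to $|\rewriteT{P}|$ and contribute no nulls, the two bounds follow from the textbook chase-termination analysis for $\mathbf{WA}$.
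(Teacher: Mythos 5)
Your proposal is correct in substance but takes a genuinely different route from the paper's proof. For part~(i), the paper never invokes weak acyclicity of $\rewriteT{P}$ itself: it reuses the null-counting machinery of Lemma~\ref{lem:tlwa-nulls-bound}, applied to the temporal grounding $P'=\ground_\timeline(\nowindows{P})$, and observes that for $P\in\CL{LWA}$ the stage bound improves from $s=|\timeline|\,{\times}\,n_p\,{\times}\,a$ to $s=n_p\,{\times}\,a$ (the dependency graph of $\stripped{P}$ has no time positions), so that with $P$ fixed all exponents in the bound $b=(k\,{\times}\,|P|\,{\times}\,|\timeline|^\ell\,{\times}\,\ell\,{\times}\,n_c)^{\ell^s}$ become constants and the chase is polynomial in the data; for part~(ii) it simply combines $\CL{LWA}\subseteq\CL{TLWA}(\timeline)$ (Theorem~\ref{theo_wa_lwa_tlwa}) with Lemma~\ref{lem:complexity-tlwa-upper}. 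You instead transfer weak acyclicity to $\rewriteT{P}$ directly, via the second claim of the acyclicity-preservation theorem of Section~\ref{sec_tlwa}, and then apply the textbook $\mathbf{WA}$ chase bounds to the polynomial-size set $\rewriteT{P}\cup\rewriteT{D}$. This is more direct and avoids ever constructing the exponentially large temporal grounding, but it shifts the whole burden onto exactly the point you flag as the main obstacle: the textbook bounds are proved for ordinary single-sorted TGDs, whereas $\rewriteT{P}$ contains arithmetic atoms, a second sort ranging over $\timeline$, and the auxiliary rules \eqref{rul_top}--\eqref{rul_boxexpandat}. The paper's detour through $\ground_\timeline(\cdot)$ and $\notime{\cdot}$ (Lemmas~\ref{lemma_partGroundWorks} and~\ref{lemma_noTime_correct}) is precisely the device that compiles the time sort and the arithmetic away so that only standard existential rules remain. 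To close your version you would need to make the ``arithmetic atoms contribute no nulls and only a polynomial factor'' remark precise, e.g.\ by noting that time-sort positions never lie on cycles through special edges and that time frontier variables range over the at most $|\timeline|$ time points, which are polynomial in the data; with that spelled out, both arguments yield the stated bounds.
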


\begin{proof}[Proof (Sketch)]
 Without loss of generality, the instance is trimmed.  \emph{(i)} For $P \in \CL{LWA}$, the number of
%symbols
abstract constants and nulls generated in a chase of
$P' = \ground_\timeline(\nowindows{P})$ can bounded similarly as in Lemma~\ref{lem:tlwa-nulls-bound} below,
but with a smaller value of $s= n_p\,{\times}\,a$, since the time
arguments can be ignored and each null value must be generated at
stage $s$. Thus, in the naive bound
$b=(k\,{\times}\,|P|\,{\times}\,|\timeline|^\ell\,{\times}\,\ell\,{\times}\,n_c)^{\ell^s}$
the exponents $\ell$ and $\ell^s$ are constant; furthermore also the
bound for ${n'_r}^{\ell^s}$ in (\ref{eq:tlwa-n_r}) is then polynomial,
and so overall we obtain that the number of abstract constants and nulls
generated is polynomially bounded.

Along a similar argumentation as in Lemma~\ref{lem:complexity-tlwa-upper},
it can be shown that only a polynomial number
of atoms will be generated, and each step can be done in polynomial
time; hence we obtain overall a polynomial time algorithm for BCQ
answering.

In case (ii), the result follows from Theorem~\ref{theo_wa_lwa_tlwa} and
Lemma~\ref{lem:complexity-tlwa-upper}.
\end{proof}

\begin{lemma} \label{lem:complexity-lwa-lower}
Problem $\bcqclass{\CL{LWA}}$ is (i) \ptimeclass-hard
under data complexity and  (ii) \doubleexp-hard under combined complexity.
\end{lemma}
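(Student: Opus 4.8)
The plan is to establish the two hardness bounds by separate reductions, each respecting the syntactic restrictions that make a program land in $\CL{LWA}$, namely that after stripping time and windows the resulting existential rule set is weakly acyclic.

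For claim (i), \ptimeclass-hardness under data complexity, I would reduce from the \ptimeclass-complete problem of monotone circuit value (or, equivalently, propositional Horn entailment / plain Datalog query answering), which is well known to be \ptimeclass-complete already in data complexity. The key point is that Datalog without existentials is trivially in $\CL{LWA}$: the stripped program has no existential variables, so its dependency graph has no special edges at all, hence no cycle through a special edge. So I would fix a Datalog program computing circuit value (predicates for gate, wire, true-value, etc.), put the program into $P$ using only plain atoms (no windows, no $@$, no $\Diamond$, no $\Box$), encode the circuit instance entirely in the data stream $D$ at a single time point, let $t$ be that time point, and let $q=\pred{accept}$. Membership of $P$ in $\CL{LWA}$ is immediate, and $P,D,t\models q$ iff the circuit evaluates to true. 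This gives \ptimeclass-hardness.

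For claim (ii), \doubleexp-hardness under combined complexity, I would reduce from the acceptance problem of a $2$-\expspace\ (equivalently \doubleexp-time) Turing machine, reusing the standard technique by which weakly acyclic existential rules (in fact full TGDs / Datalog over a chase with exponentially many nulls) capture \doubleexp\ in combined complexity --- this is the classical bound for weakly acyclic sets and is effectively what Theorem~\ref{theo_wa_lwa_tlwa} and Lemma~\ref{lem:complexity-tlwa-upper} match from above. Concretely, I would take a known weakly-acyclic existential rule program $P_0$ (with null-free input $A_0$) whose chase simulates an exponential-length, exponential-width TM computation --- the number of distinct nulls is doubly exponential and a designated fact $\pred{accept}$ is entailed iff the machine accepts. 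Then I lift $P_0$ to an \elars program $P$ by wrapping every atom trivially (e.g.\ placing all reasoning at a single time point, with no nontrivial windows), so that $\stripped{P}$ is precisely $P_0$ up to the renaming used in the proof of Theorem~\ref{theo_wa_lwa_tlwa}; hence $P\in\CL{LWA}$. The data stream $D$ encodes $A_0$ at time $t$, and $P,D,t\models\pred{accept}$ iff the TM accepts.

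The main obstacle, and the only place requiring care, is ensuring the reductions stay inside $\CL{LWA}$ while still being faithful: I must make sure the temporal and windowing machinery I add (necessary because the ambient language is \elars, not plain existential rules) does not introduce special-edge cycles in the stripped dependency graph and does not alter entailment. Since windows and $@$-operators disappear under $\stripped{\cdot}$ and time positions never lie on special-edge cycles (as noted in the proof of Theorem~\ref{theo_wa_lwa_tlwa}), confining all the "real" computation to a single time point with no genuine window dependencies makes $\stripped{P}$ coincide with the underlying (weakly acyclic, resp.\ existential-free) program, so this obstacle is handled by the earlier results. For (ii) a secondary point is to cite or re-derive the \doubleexp-hardness of BCQ answering for weakly acyclic existential rules under combined complexity, which follows from the known tight bound for that class; I would invoke it rather than re-prove the TM simulation from scratch.
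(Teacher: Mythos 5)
Your proposal is correct and follows essentially the same route as the paper: part (i) embeds Datalog (for which BCQ answering is \ptimeclass-complete under data complexity) as a window-free, existential-free program evaluated at a single time point, and part (ii) inherits \doubleexp-hardness from BCQ answering over weakly acyclic existential rules \cite{cali_query_2010} by placing the known construction on a trivial one-point timeline. The only blemish is the parenthetical equating doubly exponential space with doubly exponential time, which is inaccurate but harmless since you ultimately invoke the known \doubleexp-hardness result for weakly acyclic rules rather than relying on that equivalence.
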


\begin{proof}
In case (i), the result follows from Theorem~\ref{theo_wa_lwa_tlwa} and
from the fact that the complexity of BCQ answering from
datalog programs is \ptimeclass-complete under data complexity.

In case (ii), the result is trivially inherited from the complexity of BCQ answering from
WA existential rules, which is \doubleexpcomplete{} under combined complexity
\cite{cali_query_2010}, taking into account that the problem can be
simply reduced to an empty data stream $D$ with timeline $\timeline = [0,0]$.
\end{proof}

\subsubsection{BCQ Answering with $\CL{TLWA}(\timeline)$}
\label{app:tlwa}

\paragraph{}{The following lemma is the key for obtaining upper bounds for BCQ
answering with programs in $\CL{TLWA}(\timeline)$.}

%\TEnote{The following lemma must be adjusted: use
%$\rewriteT{\cdot}$ and recalculate the bound, the rewriting has changed}

\begin{lemma}
\label{lem:tlwa-nulls-bound}
%Suppose that whenever the head of a rule in $P$ has existential
%variables, then every atom in the head has some existential variable.
Given a trimmed instance of $\bcqclass{\CL{TLWA}(\timeline)}$,
in the skolem chase of $P' = \ground_\timeline(\nowindows{P})$ over
$\rewrite{D}\cup \rewrite{T}$, at most double exponentially
many abstract constants and nulls in the size of $P$ and $D$ occur. This number can
be (naively) bounded by
\begin{equation}
\label{eq:2exp-upper-bound}
b = (k\,{\times}\,|P|\,{\times}\,|\timeline|^\ell\,{\times}\,\ell\,{\times}\,n_c)^{\ell^s}
%b = ((|\timeline|^\ell\,{\times}\,n_r+1)\,{\times}\,\ell\,{\times}\,n_c)^{\ell^s},
\end{equation}
where
\begin{itemize}
 \item  $\ell$ is the maximal rule length in $P$;
% \item  $n_r$ is the number of rules in $P'$;
 \item $n_c$ is the number of abstract constants in $P$ and $D$;
 \item  $s= |\timeline|\,{\times}\,n_p\,{\times}\,a$, where
%((|\timeline|^\ell\,{\times}\,n_r+1)\,{\times}\,\ell\,{\times}\,n_c)^{\ell^s}
$n_p$ and $a$ are the number of predicates and the maximal predicate arity in $P$, respectively.
\end{itemize}
\end{lemma}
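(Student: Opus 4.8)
**

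The plan is to bound, stage by stage, the number of distinct abstract constants and nulls that can appear in the skolem chase of $P' = \ground_\timeline(\nowindows{P})$ over $\rewrite{D}\cup\rewrite{T}$, and to show that after a bounded number of stages no genuinely new null can be produced, so the bound $b$ in \eqref{eq:2exp-upper-bound} holds. The central observation is that in $\noTimeGroundNoWindow{P}{\timeline}$ — equivalently, in the weakly acyclic rule set obtained after temporal grounding and stripping windows — every position has a finite \emph{rank} (the length of the longest path ending at it in the WA dependency graph), and nulls created for an existential position of rank $k$ can only be generated at chase stage $k$ and can only be built from terms already present at stage $k-1$. Since $P'$ is obtained by temporal grounding, the number of positions in its dependency graph is at most $|\timeline|\times n_p\times a = s$ (one copy of each predicate position per time point), so all ranks are bounded by $s$, i.e. the chase ``depth'' is at most $s$.

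First I would set up the stage-wise counting. Let $N_i$ denote the number of distinct terms (constants plus nulls) present after stage $i$, with $N_0 = n_c + O(|\timeline|)$ accounting for the abstract constants in $P$ and $D$ plus the (polynomially many) time-sort values encoding $\timeline$ and the arithmetic closure in $\rewrite{T}$; since $|\timeline|$-many time values are polynomial, $N_0 \le k\cdot n_c\cdot|\timeline|$ for a suitable constant $k$. At each stage, a rule of $P'$ of length at most $\ell$ can be matched by choosing, for each of its at most $\ell$ body atoms, a fact over the current $N_i$ terms; the number of such matches is at most $|P'|\times N_i^{\ell}$, and each match introduces at most $\ell$ fresh nulls (one per existential position of the rule head). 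Crucially $|P'| \le |P|\times|\timeline|^{\ell}$, since temporal grounding replaces each rule of $P$ by at most one ground instance per assignment of time values to its (at most $\ell$) time variables. Hence $N_{i+1} \le N_i + |P|\times|\timeline|^{\ell}\times N_i^{\ell}\times \ell \le (k\times|P|\times|\timeline|^{\ell}\times\ell\times n_c)\times N_i^{\ell}$, absorbing lower-order terms into the constant. Unrolling this recurrence for $s$ stages gives $N_s \le (k\times|P|\times|\timeline|^{\ell}\times\ell\times n_c)^{1+\ell+\cdots+\ell^{s-1}} \le (k\times|P|\times|\timeline|^{\ell}\times\ell\times n_c)^{\ell^{s}} = b$.

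Next I would argue that stage $s$ suffices, i.e. $N_j = N_s$ for all $j\ge s$. By weak acyclicity of $\noTimeGroundNoWindow{P}{\timeline}$ (and Lemma~\ref{lemma_noTime_correct}, which transfers this to $\tfground_\timeline(P)$ and hence, via the partial-grounding correspondences of Lemma~\ref{lemma_partGroundWorks} and Theorem~\ref{theo_nowin_termination}/Lemma~\ref{theo_tempground_termination}, to the chase on $\rewrite{\nowindows{P}}$ over $\rewrite{D}\cup\rewrite{T}$), every special-edge path in the dependency graph has length less than the number of positions, which is bounded by $s$. A standard rank argument for the skolem chase on a WA rule set then shows that any null occurring at a position of rank $r$ is generated no later than stage $r$; since all ranks are $<s$, no null is created after stage $s$, so $N_s$ is the total term count. (The Remark preceding Theorem~\ref{theo_nowin_termination} lets us ignore the auxiliary window-expansion rules \eqref{rul_boxexpand}--\eqref{rul_boxexpandat} here, since windows have been removed; the remaining rewriting only adds the harmless $\freshPred{\window\Box\,p}(\vec t,0,c)$ bookkeeping argument, which lives in the time sort and so does not contribute abstract nulls.) I would also note that the choice of $s = |\timeline|\times n_p\times a$ rather than the smaller value $n_p\times a$ used in Lemma~\ref{lem:complexity-lwa} is exactly because temporal grounding multiplies the number of positions by $|\timeline|$.

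The main obstacle is the bookkeeping needed to justify that the ``effective'' rule set whose chase we are counting really is the weakly acyclic set $\noTimeGroundNoWindow{P}{\timeline}$ up to a bijective/renaming correspondence, so that its rank bound legitimately controls the chase on $\rewrite{D}\cup\rewrite{T}$: one must chain together Theorem~\ref{theo_nowin_termination}, Lemma~\ref{theo_tempground_termination}, Lemma~\ref{lemma_partGroundWorks}, and Lemma~\ref{lemma_noTime_correct}, and check that none of these steps introduces new abstract constants or nulls beyond a polynomial factor (they do not — the extra facts are all arithmetic or time-sorted). The remaining work — the recurrence unrolling and the observation $1+\ell+\cdots+\ell^{s-1}\le \ell^{s}$ — is routine and I would not spell it out in full.
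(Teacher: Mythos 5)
Your proof takes essentially the same route as the paper's: it bounds the number of null-generation stages by $s=|\timeline|\,{\times}\,n_p\,{\times}\,a$ using weak acyclicity of the dependency graph of $\notime{P'}$ (whose positions number at most $s$), and then unrolls the per-stage recurrence $N_{i+1}\leq N_i + |P'|\cdot\ell\cdot N_i^{\ell}$ with $|P'|\leq k\,{\times}\,|P|\,{\times}\,|\timeline|^{\ell}$ to reach the bound $b$. The only cosmetic differences are your explicit rank formulation of the stage bound (the paper phrases it as nulls ``flowing'' along normal edges without returning past a special edge) and your inclusion of the time-sort values in $N_0$, which the paper simply omits since only abstract constants and nulls are counted.
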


\begin{proof}
%Without loss of generation, we assume that each rules in $P$
%``stores'' each generated null value in some
We consider the application of the existential rules in $P'$ for null value
generation as in the skolem chase. Starting from constants, the first null values are
generated by applying rules where all frontier variables are
substituted with constants; then the null values generated can take
part in generating further null values etc. Each null value is
generated at a {\em stage}\/ $i\geq 0$, which corresponds to the
step of the skolem chase sequence where it first appears.
% is the maximum of the
% stages of all null values that are mapped to the frontier variables of
% the rule that is generating the null value.

As $P$ is in $\CL{TLWA}(\timeline)$, the program $P'$ is weakly
acyclic. We claim that each null value will be generated up to at most
stage $s= |\timeline|\,{\times}\,n_p\,{\times}\,a$.

Recall that, as remarked earlier, the rules \eqref{rul_boxexpand}--\eqref{rul_boxexpandat}
are not included in the rewriting for $P'$.

The program $P'$ therefore is of the form in Lemma~\ref{lemma_noTime_correct}, and we have
in $\notime{P'}$ predicates $\freshPred{p}_t$ where $p$ is a simple
predicate from $P$ and $t$ is a time point in $\timeline$.

In the dependency graph $G$ for $\notime{P'}$, the number of nodes is
thus bounded by $|\timeline|\,{\times}\,n_p\,{\times}\,a$ (note that
the nullary predicates $\freshPred{\top}_t$ do not create nodes in $G$).

Each null value $\omega$ of stage $i$ can flow by rule applications
along different predicate argument positions,
reflected by normal edges in $G$, until it is used in creating a null
value $\omega'$ of stage $i+1$. As  $P'$ is weakly acyclic, also
$\notime{P'}$ is weakly acyclic and so the newly created null value $\omega'$
can not flow to any predicate argument positions at which $\omega$ was present.
Hence, the number of stages for null value generation is bounded by
$|\timeline|\,{\times}\,n_p\,{\times}\,a$.

We will now argue that the bound $b$ claimed in
(\ref{eq:2exp-upper-bound}), i.e. $b = (k\,{\times}\,|P|\,{\times}\,|\timeline|^\ell\,{\times}\,\ell\,{\times}\,n_c)^{\ell^s}$
holds and that it is double exponential in the size
of $P$ and $D$.

Now let $n_r = |P'|$ denote the number of rules in $|P'|$.
Let $v_i$ denote the number of abstract constants and null values that we have
up to the $i^{th}$ stage of null value generation. If $i=0$, then we have
not have applied any rules to generate null values and we thus have $v_0 = n_c$.

If we apply the rules to generate null values for the first time, their number is
bounded by $n_r\times \ell\times {v_0}^\ell$ as we have $n_r$ rules, each rule has at most
$\ell$ existential variables, and we have at most $\ell$ frontier variables that
generate a null value.

So $v_1$ will satisfy
\begin{linenomath}
  $$v_1 \leq v_0 + n_r\times \ell\times {v_0}^\ell \leq n_r'\times
  \ell\times {v_0}^\ell,$$
\end{linenomath}
where $n_r' = n_r + 1$.

Now we just iterate to obtain $v_2$ in a similar way, and we get
\begin{linenomath}
$$
  v_2 \leq  v_1 + n_r\times\ell\times {v_1}^\ell  \leq (n_r'\times
  \ell)^{\ell+1}\times {v_0}^{\ell^2}.
$$
\end{linenomath}
\noindent If we continue this, we can get the form

\begin{linenomath}$$ v_i \leq   (n_r'\times \ell)^{\ell^i}\times {v_0}^{\ell^i} =
(n_r'\times \ell\times v_0)^{\ell^i}.
$$\end{linenomath}

\noindent That is, we get a value that for $i=s$ fulfills

\begin{linenomath}
$$v_s \leq (n_r'\times \ell\times v_0)^{\ell^s} = ((n_r+1)\times \ell\times v_0)^{\ell^s}.$$
\end{linenomath}

Now $\ell$ and $v_0=n_c$ are polynomial in the size of $P$ and $D$; hence
the terms $\ell^{\ell^s}$ and $v_0^{\ell^s}$ are double exponential in
the size of $P$ and $D$. The number $n_r$ of rules in $P'$ obeys
\begin{linenomath}$$
n_r < k\,{\times}\,|P|\,{\times}\,|\timeline|^\ell, \quad \text{ thus
} \quad n_r' \leq k\,{\times}\,|P|\,{\times}\,|\timeline|^\ell
$$
\end{linenomath}
for some constant $k$ (each rule in $P$ induces one rule
in $\rewrite{P}$ and assuming $P$ is nonempty $k$ accounts for the
extra rules  (\ref{rul_top}--\ref{rul_boxexpandat})), and thus is single exponential in the size of $P$
and $D$. Consequently,
\begin{equation}
\label{eq:tlwa-n_r}
{n'_r}^{\ell^s} \leq (k\,{\times}\,|P|\,{\times}\,|\timeline|^\ell)^{\ell^s} = (k\,{\times}\,|P|)^{\ell^s}\,{\times}\,|\timeline|^{\ell^{s+1}}
\end{equation}
is double exponential in the size of $P$ and $D$; this shows the claim.
\end{proof}

We note that under data complexity, the bound in
Lemma~\ref{lem:tlwa-nulls-bound} is still double exponential
in the length $|\timeline|$ of the timeline $\timeline$, and in fact instances where double
exponentially many null values are created do exist (see the proof of
Lemma~\ref{lem:complexity-tlwa-upper}). However, if in addition $|\timeline|$ is bounded by a constant, then the bound is
polynomial. Hence, BCQ answering with
programs in $\bcqclass{\CL{TLWA}(\timeline)}$ is tractable in this case.

%similar if BCQ answering is restricted to a fixed timepoint.

Based on Lemma~\ref{lem:tlwa-nulls-bound}, we obtain the following result.

%\TEnote{The proof of the following lemma must also be adjusted}

\begin{lemma}
\label{lem:complexity-tlwa-upper}
Problem $\bcqclass{\CL{TLWA}(\timeline)}$ is in \doubleexp{} under
data and combined complexity.
\end{lemma}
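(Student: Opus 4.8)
The plan is to bound the running time of the standard terminating skolem-chase algorithm for BCQ answering and to read off a \doubleexp{} bound from Lemma~\ref{lem:tlwa-nulls-bound}. By Propositions~\ref{prop:bcq-aux-1} and~\ref{prop:bcq-aux-2} we may assume the instance is trimmed, so that the query is a single nullary atom $\pred{yes}$ and the largest window size is at most $|\timeline|-1$. For $P\in\CL{TLWA}(\timeline)$, Theorem~\ref{theo_tlwa} together with Lemma~\ref{theo_tempground_termination} and the remark preceding Theorem~\ref{theo_nowin_termination} guarantees that the skolem chase of $P'=\ground_\timeline(\nowindows{P})$ over $\rewrite{D}\cup\rewrite{T}$ terminates; it therefore suffices to compute this chase and test whether $\pred{yes}$ occurs in it.

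First I would account for the size of the objects involved. The fact encodings $\rewrite{D}$ and $\rewrite{T}$ have size polynomial in $|D|$ and $|\timeline|$, and $n_r=|P'|$ is bounded by $k\times|P|\times|\timeline|^\ell$, i.e.\ single exponential in the input; moreover each rule of $P'$ is obtained by enumerating the (at most $|\timeline|^\ell$) solutions of its arithmetic body over the timeline encoding and checking each in polynomial time, so $P'$ can be produced in single-exponential time. By Lemma~\ref{lem:tlwa-nulls-bound}, the chase introduces at most $b$ abstract constants and nulls, with $b$ as in \eqref{eq:2exp-upper-bound}, which is double exponential in $|P|$ and $|D|$; together with the $O(|\timeline|)$ time points this yields at most $b+|\timeline|$ terms, hence at most $n_p\times(b+|\timeline|)^a$ distinct atoms. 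Since $a$ is polynomial in the input, raising a double-exponential base to this power stays double exponential, so the whole chase result has double-exponential size.

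Next I would bound the cost of running the chase. The chase sequence has at most as many stages as there are distinct atoms, i.e.\ at most double-exponentially many. At each stage, for every rule $r\in P'$ one enumerates the candidate $\timeline$-matches of $\body{r}$; a body has at most $\ell$ atoms and a polynomial number of variables, so there are at most $(b+|\timeline|)^{O(\ell a)}$ candidates, each of which is checked against (and the head added to) the current fact set in time polynomial in its double-exponential size. Multiplying the single-exponential number of rules, the double-exponential number of stages, and the double-exponential per-rule cost still yields a double-exponential time bound --- exactly the kind of estimate already carried out for ${n'_r}^{\ell^s}$ and $v_0^{\ell^s}$ in \eqref{eq:tlwa-n_r} --- and the final membership test for $\pred{yes}$ (or, in a non-trimmed instance, a homomorphism of $q$ into the chase) is polynomial in the chase size. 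This gives membership in \doubleexp{} under combined complexity; and since, as noted in the remark following Lemma~\ref{lem:tlwa-nulls-bound}, $b$ remains double exponential in $|\timeline|$ even when $P$ is fixed, the same argument gives the \doubleexp{} bound under data complexity.

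The only delicate point is the bookkeeping of the last paragraph: one must verify that the three quantities being multiplied each lie within \doubleexp{}, and in particular that powers such as $b^\ell$ and $(b+|\timeline|)^{O(\ell a)}$ do not push the bound up to a triple exponential. This is immediate from the computation in the proof of Lemma~\ref{lem:tlwa-nulls-bound} showing that a double-exponential base raised to a polynomial exponent (indeed, to $\ell^s$ with $\ell$ and $s$ polynomial) is still double exponential; everything else is a routine unwinding of the semi-oblivious chase.
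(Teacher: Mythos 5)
Your argument follows the paper's own route almost step for step: trim the instance via Propositions~\ref{prop:bcq-aux-1} and~\ref{prop:bcq-aux-2}, invoke Lemma~\ref{lem:tlwa-nulls-bound} for the double-exponential bound $b$ on constants and nulls, bound the number of derivable atoms by a power of $b$ with polynomial exponent, multiply by the number of chase stages and a per-rule matching cost, and observe that each factor stays within \doubleexp{} (the same ``double-exponential base to a polynomial power'' bookkeeping used for ${n'_r}^{\ell^s}$ in \eqref{eq:tlwa-n_r}); the data-complexity case falls out of the same bound. All of that counting is sound.

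One step needs repair, though. You write that it ``suffices to compute this chase [of $P'=\ground_\timeline(\nowindows{P})$] and test whether $\pred{yes}$ occurs in it.'' Taken literally, that decision procedure is unsound: $\nowindows{P}$ is a logical \emph{strengthening} of $P$ (window atoms in rule bodies are weakened), so its chase is a superset of the chase for $\rewrite{P}$ and may contain $\pred{yes}$ even when $P,D,t\not\models q$. The windowless, grounded program is only there to supply termination (via Lemma~\ref{theo_tempground_termination}) and the size and time bounds; the algorithm must actually run the skolem chase of $\rewrite{P}$ over $\rewrite{D}\cup\rewrite{T}$ and evaluate the query against \emph{that} result. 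Since that chase is contained in the one you bounded, every resource estimate you derived carries over unchanged, so the fix is a rewording of the algorithm rather than any new argument --- but as stated the procedure does not decide the problem, only over-approximate it.
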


\begin{proof} Without loss of generality, the instance is trimmed.
We note that the program $\nowindows{P}$ is a logical strengthening of
$P$, as in each rule $r$ in $P$ the body of $r$ is weakened; that is,
more rule applications to derive null-free facts are possible for $\nowindows{P}$ over
$D$ than for $P$. Hence, an upper bound
for deriving the query atom $q$ with  $\nowindows{P}$ over $D$ will give us an upper bound for
deriving $q$ with  $P$ over $D$ as well. In the sequel, we thus
consider $\nowindows{P}$ and use Theorem~\ref{theo_tempground_termination}.

By Lemma~\ref{lem:tlwa-nulls-bound},
the number of abstract constants and nulls created by evaluating
the program $P' = \ground_\timeline(\nowindows{P})$ over
$\rewrite{D}\cup \rewrite{T}$ is bounded by a double exponential
number
$b = (k\,{\times}\,|P|\,{\times}\,|\timeline|^\ell\,{\times}\,\ell\,{\times}\,n_c)^{\ell^s}$.
Recall again that rules \eqref{rul_boxinit}--\eqref{rul_boxexpandat} are omitted from $P'$,
as remarked before.

% For answering the query $q$, the partial groundings of the rules (\ref{rul_boxinit}--\ref{rul_boxexpandat}) in
% $P'$ are irrelevant and can be omitted. For the resulting program
% $P''$,
By Lemma~\ref{lemma_noTime_correct}, instead of $P'$ we can equivalently
consider $\notime{P'}$, in which the number of predicates is bounded by  $\|P\|\cdot|\timeline|$
and their arities are bounded by the maximal rule length $\ell$ in
$P$. Hence, no more than
\begin{linenomath}\begin{eqnarray*}
\label{eq:bound-atoms}
c_1\|P\|\cdot|\timeline|b^{\ell}
                 &\!\!\!=\!\!\! & c_1\|P\|\cdot|\timeline|((k\,{\times}\,|P|\,{\times}\,|\timeline|^\ell\,{\times}\,\ell\,{\times}\,n_c)^{\ell^s})^{\ell} \nonumber \\
                 &\!\!\!=\!\!\!& c_1\|P\|\cdot|\timeline|(k\,{\times}\,|P|\,{\times}\,|\timeline|^\ell\,{\times}\,\ell\,{\times}\,n_c))^{\ell{\times}\ell^s}
\end{eqnarray*}\end{linenomath}
many ground atoms, where $c_1$ is a constant, will be
derived by the skolem chase to answer the query $q$,  which is double exponential in the size of $P$
and $D$. As each derivation step can be done, relative
to the already derived atoms, in time exponential in the maximal
number of variables in a rule (by simple rule matching) and thus in double exponential time,
the overall time to run the skolem chase is bounded double exponentially.
Furthermore, computing $P'$ is feasible in exponential time in the
size of $P$ and $D$, and computing $\rewrite{D} \cup \rewrite{T}$ is feasible
in polynomial time in the size of $P$ and $D$. Summing up,
this yields a double exponential time upper bound for problem
$\bcqclass{\CL{TLWA}(\timeline)}$.
\end{proof}

\begin{lemma}
\label{ref:tlwa-2exptime-hardness}
Problem $\bcqclass{\CL{TLWA}(\timeline)}$ is \doubleexp-hard under
data complexity and combined complexity.
\end{lemma}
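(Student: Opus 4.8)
Combined-complexity hardness follows from data-complexity hardness, which we prove below, since the data-complexity-hard instances we construct use a single fixed program $P$; alternatively, as in Lemma~\ref{lem:complexity-lwa-lower}(ii), it is inherited from the \doubleexp-hardness of BCQ answering over weakly acyclic existential rules \cite{cali_query_2010}, by taking the one-point timeline $\timeline=[0,0]$, on which a \elars program is essentially an ordinary existential rule set and membership in $\CL{TLWA}([0,0])$ coincides with weak acyclicity (cf.\ Theorem~\ref{theo_wa_lwa_tlwa} and Theorem~\ref{theo_tlwa}). It therefore suffices to establish \doubleexp-hardness under \emph{data} complexity, i.e.\ for one fixed $P\in\CL{TLWA}(\timeline)$.

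The plan is to reduce acceptance of a fixed deterministic Turing machine $M$ running in time and space $2^{2^{p(n)}}$ on inputs of length $n$, for a fixed polynomial $p$, to $\bcqclass{\CL{TLWA}(\timeline)}$. Given an input word $w$ with $|w|=n$, the reduction outputs the fixed program $P$, a data stream $D=(\timeline,v_D)$ with $|\timeline|=\Theta(p(n))$ (so $|D|$ is polynomial in $n$), the time point $t=0$, and the nullary query $q=\pred{yes}$, such that $M$ accepts $w$ iff $P,D,0\models\pred{yes}$. The construction exploits exactly the mechanism from Lemma~\ref{lem:tlwa-nulls-bound}: after temporal grounding, $\noTimeGroundNoWindow{P}{\timeline}$ (i.e.\ $\notime{\ground_\timeline(\nowindows{P})}$) has, for every time point $t\in\timeline$, a private copy $\freshPred{p}_t$ of each predicate $p$, so its dependency graph (Definition~\ref{def_wa}) contains a \emph{path}, but no cycle, of length $\Theta(|\timeline|)$ through special edges. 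Hence a fixed rule set can drive a skolem-chase null generation of depth $\Theta(|\timeline|)$: using rules of the schematic shape ``$@_T\,\freshPred{addr}(A)\wedge @_T\,\freshPred{addr}(A')\wedge T'{=}T{+}1\wedge\window^0@_{T'}\top\to\exists Z.\,@_{T'}\,\freshPred{addr}(A,A',Z)$'', which pair every two level-$t$ address nulls into a level-$(t{+}1)$ one, the chase produces in $\Theta(|\timeline|)$ stages a set of $2^{2^{\Theta(|\timeline|)}}=2^{2^{\Theta(p(n))}}$ address nulls; choosing the constant in $|\timeline|=\Theta(p(n))$ large enough makes this at least $2^{2^{p(n)}}$, enough to index all tape cells and all time steps of $M$ on $w$.

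Because each level-$(t{+}1)$ address is literally the pair of level-$t$ addresses from which it was generated, a linear order and a successor relation on each level are rule-definable (lexicographic comparison of pairs, bootstrapped from the two level-$0$ addresses $0,1$ read from the data). On top of this addressing layer one places a routine simulation of $M$: the initial configuration is read off from $w$ placed in $v_D$; configuration predicates relate a ``time'' address, a ``cell'' address, and a tape symbol; non-generating rules derive the configuration at TM-time $\tau{+}1$ from that at $\tau$ by comparing neighbouring cells; and $\pred{yes}$ is derived exactly when an accepting configuration appears. Since the simulation rules create no new nulls and act on predicates disjoint from the address-building ones, they add only normal edges and leave $\noTimeGroundNoWindow{P}{\timeline}$ weakly acyclic, so $P\in\CL{TLWA}(\timeline)$ (Theorem~\ref{theo_tlwa}, Lemma~\ref{theo_tempground_termination}), and the chase terminates since the address space is finite and $M$ is time-bounded. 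This poly-time reduction with $|D|$ polynomial in $n$ yields \doubleexp-hardness under data complexity.

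The main obstacle is the detailed design and correctness of the addressing layer: one must (i) generate the $2^{2^{\Theta(|\timeline|)}}$ address nulls in a structured enough way that a linear order with successor is rule-definable uniformly across all $\Theta(|\timeline|)$ levels, and (ii) keep every special-edge path in the dependency graph of $\noTimeGroundNoWindow{P}{\timeline}$ strictly time-increasing, so that no special-edge cycle arises and weak acyclicity (hence universal chase termination, hence $P\in\CL{TLWA}(\timeline)$) is preserved. Once the order and successor are in place, the Turing-machine simulation layered on top is entirely standard.
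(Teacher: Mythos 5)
Your proposal is correct and follows essentially the same route as the paper: both adapt the Cal\`i--Gottlob--Pieris \doubleexp-hardness construction for weakly acyclic rules by turning the inductive level index $i$ of the doubling predicates into the time point $@_i$, so that a fixed program pairs the level-$t$ address nulls into level-$(t{+}1)$ nulls, yielding $2^{2^{\Theta(|\timeline|)}}$ addresses with a rule-defined successor/order while keeping all special-edge paths strictly time-increasing (hence no special cycle in the temporally grounded dependency graph), with a standard TM simulation on top. The only cosmetic difference is how the program is kept fixed for data complexity: you fix the machine and put its input in the stream, whereas the paper puts the transition table in the stream and assumes empty input.
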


\begin{proof}
This result can be accomplished by adjusting a \doubleexp-hardness
proof for BCQ Answering from a set of WA
existential rules by \citeauthor{cali_query_2010} (\citeyear{cali_query_2010}). Their
proof presents an encoding of the acceptance problem for a
(deterministic) Turing machine $M$ that operates on a given input $I$ in double exponential
time. At the core of the encoding are existential rules that generate double
exponentially many null values with a linear order (given by a
successor relation $succ$ and predicates $min$ and $max$ that single
out the first and the last element, respectively). The rules are
schematic and use indexed predicates $succ_i, min_i, max_i, r_i, s_i$
which are defined inductively for $i=0,\ldots m$.

The machine computation is then simulated using standard Datalog
rules at $m$,  which are fixed (independent of the machine
$M$); further rules serve to describe the tape contents of the initial
configuration. For a machine description $M$, a Boolean query $q =
accept(X)$, where $X$ refers to a time instant of the computation
(represented by a null), evaluates to true iff $M$ accepts the input.

We now describe the construction following \cite{pieris2011ontological},
adapted to our needs for $\CL{TLWA}(\timeline)$ programs. The key observation is that each indexed predicate
$p_i$ from above can be replaced by a predicate  $p$, such that
$p_i(\vec{x})$ is represented by $@_i p(\vec{x})$, where
we use a timeline $\timeline = [0,\ldots,m]$.

Let $M = \tuple{S, \Lambda, \blank, \delta, s_0 , F}$ be an
(one-tape) deterministic Turing machine (DTM), where $S$ is a finite (non-empty) set of
states, $\Lambda$ is the finite (non-empty) set of the tape symbols,
$\blank\in \Lambda$ is the blank symbol, $\delta: (S \setminus F) \,{\times}\ \Lambda \rightarrow S {\times} \Lambda {\times} \{-1,1,0\}$ is the transition function, $s_0 \in S$ is the initial state, and
$F \subseteq S$ is the set of accepting states. We assume that M is well-behaved
and never tries to read beyond its tape boundaries.

Without loss of generality, we can always assume that $M$ has exactly
one accepting state, denoted as $s_{acc}$, and that $s_0$ and
$s_{acc}$ are always the same (i.e., fixed). Furthermore, we may assume
that $M$ operates on empty input ($I$ is void; we could in polynomial time construct a machine $M'$ that first
writes $I$ on the tape and then simulates $M$).
This assumption is not made in \cite{pieris2011ontological}, but simplifies the construction.

We construct a fixed $\CL{TLWA}(\timeline)$ program $P$, a data stream $D$, and a
(fixed) BCQ $q$ such that $P,D \models q$  iff $M$ accepts the empty
input $I$ within time $2^{2^m}$-1 where $m=n^k$, $k>0$, and $n$ is the
size of $M$; here $\delta$ is represented by a table $T_\delta$ that
holds tuples $\vec{t} = (s, a, s', a', d)$, with the meaning that if the
machine reads $a$ in state $s$ at position $k$ on the tape, then it replaces
$a$ with $a'$, changes to state $s'$, and moves the cursor
to position $k+d$.

We use the following predicates:

\begin{itemize}
\item $symbol/3$ to hold the contents of a cell of the tape, where
$symbol(\tau,\pi,a)$ means that at time instant $\tau$, cell $\pi$
holds symbol $a$;
\item $cursor/2$ to hold the position of the cursor (read/write head),
where $cursor(\tau,\pi)$ means that at time instant $\tau$, the cursor
is at position $\pi$;
\item $state/2$ to  hold the state, where $state(\tau,s)$ means that
at time instant $\tau$, the machine is in state $s$;

 \item $transition/5$ to store the transition function $\delta$; for
  each tuple $\vec{t} \in T_\delta$, we have an atom
  $transition(\vec{t})$. This predicate is extensional, i.e., it is in the
  data stream.

\item  $accept/1$ to hold that $M$ accepts, where
    $accept(\tau)$ means that it accepts at time instant $\tau$;

\item  $succ/2$, $min/1$, $max/1$, $r/1$, $s/3$: these predicates are
        auxiliary predicates to generate  double exponentially many
        symbols for the simulation of $M$;

\item  ${\leq}/2$: this is a linear order on the (double exponentially many)
       elements of $r$ at the end of the stream;

\item  $end/0$: an atom to mark the end of the stream.

\end{itemize}

The idea is that in the stream, at time point 0 we will have $2^{2^0}=2$
many elements, $c_0$ and $c_1$, which are distinct constants. Using $r$ and
$s$, these constants will create new nulls in a progressive
fashion along the timeline, such that at time point $m$, we shall
have $2^{2^m}$ many elements.

The data stream $D = (\timeline,v_D)$ has the timeline $\timeline = [0,m]$, and we
put at time $m$ the description of the transition function of $M$,
i.e., all facts $transition(s, a, s', a', d)$, and the atom $end$; there are no further
atoms in $D$. That is, $v_D = \{ m \mapsto \{
transition(\vec{t}) \mid \vec{t} \in T_\delta\} \cup \{ end\}$.

The program $P$ consists of facts and rules as follows.
\begin{itemize}
  \item $@_0 min(c_0)$, $@_0 max(c_0)$, $@_0 succ(c_0,c_1)$, $@_0
               r(c_0)$, $@_0 r(c_1)$.

 \item (initialization rules) the tape of $M$ will be initialized to
       all blanks (owing to our assumption; recall that rules are implicitly universally quantified over time):
       \begin{linenomath}
$$end, min(X), r(Y)  \to symbol(X,Y,\blank);$$
\end{linenomath}
the cursor is at the initial position:
       \begin{linenomath}
$$end, min(X) \to cursor(X,X)$$
\end{linenomath}
and the machine is in the
                initial state:
       \begin{linenomath}
$$end, min(X) \to state(X,s_0)$$
\end{linenomath}
 \item (transition rules) three rules describe the moves of $M$:
 \begin{itemize}
 \item left move:
       \begin{linenomath}
\begin{align*}
&end, transition(S_1,A_1 , S_2 , A_2 ,-1), \\
& symbol(T_1,C_2,A_1), state(T_1,S_1), \\
& cursor(T_1,C_2), succ(T_1, T_2), succ(C_1,C_2 ) \to \\
&symbol(T_2,C_2,A_2),state(T_2,S_2), cursor(T_2,C_1)
\end{align*}
\end{linenomath}

 \item right move:
       \begin{linenomath}
\begin{align*}
&end, transition(S_1,A_1 , S_2 , A_2 ,1), \\
& symbol(T_1,C_1,A_1),state(T_1,S_1), \\
& cursor(T_1,C_1), succ(T_1, T_2), succ(C_1,C_2 ) \to \\
&symbol(T_2,C_1,A_2),state(T_2,S_2), cursor(T_2,C_2)
\end{align*}
        \end{linenomath}

\item  stay move:
       \begin{linenomath}
\begin{align*}
&end, transition(S_1,A_1 , S_2 , A_2 ,0), symbol(T_1,C,A_1), \\
&state(T_1,S_1), cursor(T_1,C), succ(T_1, T_2) \to \\
&symbol(T_2,C,A_2),state(T_2,S_2), cursor(T_2,C)
\end{align*}
        \end{linenomath}

\end{itemize}

\item (inertia rules) the contents of the tape not at the cursor
position has to be carried over:
       \begin{linenomath}
\begin{align*}
& end, cursor(T_1,C_2), succ(C,C_2),  {\leq}(C_1,C),\\
&  symbol(T_1,C_1,A), succ(T_1,T_2) \to \\
& symbol(T_2,C_1,A)\\[3pt]
& end, cursor(T_1,C_1), succ(C_1,C),  {\leq}(C,C_2), \\
&  symbol(T_1,C_2,A), succ(T_1,T_2) \to \\
& symbol(T_2,C_2,A)
\end{align*}
        \end{linenomath}

\item (acceptance rule)
       \begin{linenomath}
$$
end, state(T,s_{acc}) \to accept(T)
$$
        \end{linenomath}
\end{itemize}

In addition to these rules, we have rules that define the auxiliary
predicates. At the heart is the generation of nulls at the
time point $m$, which represent exponentially long bit vectors. This
is accomplished with the following rules:
       \begin{linenomath}
\begin{align*}
&   r(X), r(Y) \to \exists Z. s(X,Y,Z) \\[3pt]
&   s(X,Y,Z), T'=\mathsf{now}+1  \to  \exists Z. @_{T'} r(Z)\\[3pt]
&   s(X,Y_1,Z_1), s(X,Y_2,Z_2),\\
&  succ(Y_1,Y_2), T'=\mathsf{now}+1  \to  @_{T'} succ(Z_1,Z_2)\\[3pt]
&   s(X_1,Y_1,Z_1), s(X_2,Y_2,Z_2), max(Y_1),
    min(Y_2), \\
& succ(X_1,X_2), T'= \mathsf{now}+1  \to @_{T'} succ(Z_1,Z_2)\\[3pt]
&   s(X,X,Z), min(X), T'=\mathsf{now}+1 \to @_{T'} min(Z)\\[3pt]
&   s(X,X,Z), max(X), T'=\mathsf{now}+1 \to @_{T'} max(Z)
\end{align*}
        \end{linenomath}
Intuitively, the effect of these rules is that, at each time point $i$,
the elements in $r$ are paired, such
that tuples of length $2^i$ yield tuples of length
$2^{i+1}$, where nulls give names to these tuples.

Finally, the order $\leq$ is defined as follows:
       \begin{linenomath}
\begin{align*}
& end, r(X) \to {\leq}(X,X) \\
& end, succ(X,Y), {\leq}(Y,Z) \to {\leq}(X,Z).
\end{align*}
        \end{linenomath}
This completes the description of the program $P$. Notice that for
varying inputs $M$, the rules of $P$ are by our assumptions the same, thus fixed.
Furthermore, only a single rule has an existential variable in the head; however, no cycles
through special edges in the dependency graph of the temporal
grounding of the program $P$ over $D$ are possible.

It can be shown that some atom $accept(t)$, where $t$ is a ground term
(in fact, a null) can be derived with $P$ over the data stream $D$ at time point $m$ iff $M$ accepts (the empty) input.

To complete the construction, we thus set the BCQ $q$ to $accept(X)$
for evaluation at time point $m$. Alternatively, we could also
introduce a rule
       \begin{linenomath}
$$
accept(X) \to q
$$
        \end{linenomath}
and answer $q$ at time point $m$.
\end{proof}

Please notice that in the construction in
Lemma~\ref{ref:tlwa-2exptime-hardness}, we may change the accept rule
to
       \begin{linenomath}
$$
accept(X) \to @_0 q
$$
        \end{linenomath}
i.e., put the query atom at time point 0; thus asking whether $q$ can
be entailed with $P$ over $D$ at time point 0 is \doubleexp-hard,
i.e., for BCQ at a fixed query time.

\section{Further experimental details}
\label{app:experiments}

The experiments were conducted using a MacbookPro16,1 with Intel Core i7 2.6GHz
and 32GB RAM. Scenario $S_A$ is meant to simulate a stream with $b$ conveyor
belts and multiple sensors that measure the speed and the temperature. When the
speed is too slow or the temperature is too high, the rules trigger warnings and
errors.  The rules used in this scenario are the following:
\begin{linenomath}\begin{align}
    \pbelt(X) &\rightarrow \exists Y. \poperator(X,Y) %\land \ponduty(Y)
    \label{r:operator} \\
    \timeWindow{5} \Diamond \pspeed(X,Y) \land \plowspeed(Y) & \rightarrow
    \exists Z.\pbrokengear(X,Z) \label{r:brokengear} \\
\timeWindow{3} \Box \ptemp(X,Y) \land \phightemp(Y) &\rightarrow \exists
Z.\pproblemId(Z,X) \label{r:newproblem} \\
\pproblemId(Y,X) \land \poperator(X,Z) &\rightarrow \passign(Y,Z)
\label{r:assign} \\
\timeWindow{3} \Box \pproblemId(Z,X) &\rightarrow \perror(X) \label{r:error}
\end{align} \end{linenomath} \noindent where \poperator{}, \pbrokengear{},
\pproblemId{} is short for \poperatorFull{}, \pbrokengearFull{},
\pproblemIdFull{}, respectively.
%In this case, the rules
Here (\ref{r:operator}) and (\ref{r:brokengear}) use existentials to introduce
new potentially unknown individuals while
%rule
(\ref{r:newproblem}) introduces a new incident ID if the temperature is high;
(\ref{r:assign}) assigns the incident to the current belt's operator, while
(\ref{r:error}) blocks the belt if the incident has been persisting since three
time points in the past.

Scenario $S_B$ is created as follows.
$\deep$ from the $\chasebench$ suite \cite{bench_chase} contains 1k facts
and 1.1k existential rules with 1 body atom and 3-4 head atoms, and predicates
having arity 3 or~4. We created a stream by
copying all 1k facts on each time point, and prefixed in rules each body atom
$B$ in with either $\window^n \Box$ (50\%) or $\window^n \Diamond$ (50\%) for
some $n$. In both scenarios, we created streams with 100 time points and set
$\ell\,{=}\,6$, which is large enough to fill all windows.

The data stream contains $b=100$ belts. For each belt, the data stream contains
3 facts: the identifier of the belt (e.g., $\pbelt(b_1)$, the value of the
speed, which can be either high or low (e.g., $\pspeed(b_1,low)$), and the value
of the temperature, which is an integer from 1 to 9 (e.g., $\ptemp(b_1,3)$). At
each time point, every belt has a slow speed with probability $p_1$ (hence
triggering rule~(\ref{r:brokengear})), and high temperature with probability
$p_2$, which lasts for at least four consecutive time points with probability
$p_3$ to trigger rules~(\ref{r:newproblem}-\ref{r:error}). The computation of
the \elars{} model is invoked at each time point and the reported numbers contain
the averages across all time points.

}{}


\begin{thebibliography}{}

\bibitem[\protect\citeauthoryear{Anicic \bgroup et al\mbox.\egroup
  }{2011}]{anicic_ep-sparql:_2011}
Anicic, D.; Fodor, P.; Rudolph, S.; and Stojanovic, N.
\newblock 2011.
\newblock {EP}-{SPARQL}: {A Unified Language for Event Processing and Stream
  Reasoning}.
\newblock In {\em WWW},  635--644.

\bibitem[\protect\citeauthoryear{Baget \bgroup et al\mbox.\egroup
  }{2011}]{BLMS11:decline}
Baget, J.-F.; Lecl{\`e}re, M.; Mugnier, M.-L.; and Salvat, E.
\newblock 2011.
\newblock On rules with existential variables: {Walking} the decidability line.
\newblock {\em Artificial Intelligence} 175(9--10):1620--1654.

\bibitem[\protect\citeauthoryear{Barbieri \bgroup et al\mbox.\egroup
  }{2010}]{barbieri_c-sparql:_2010}
Barbieri, D.~F.; Braga, D.; Ceri, S.; Valle, E.~D.; and Grossniklaus, M.
\newblock 2010.
\newblock {C-SPARQL}: a continuous query language for rdf data streams.
\newblock {\em International Journal of Semantic Computing} 04(01):3--25.

\bibitem[\protect\citeauthoryear{Bazoobandi, Beck, and Urbani}{2017}]{laser}
Bazoobandi, H.~R.; Beck, H.; and Urbani, J.
\newblock 2017.
\newblock Expressive {Stream} {Reasoning} with {Laser}.
\newblock In {\em {ISWC}},  87--103.

\bibitem[\protect\citeauthoryear{Beck, Dao-Tran, and Eiter}{2018}]{lars}
Beck, H.; Dao-Tran, M.; and Eiter, T.
\newblock 2018.
\newblock {LARS}: {A} {Logic}-based framework for {Analytic} {Reasoning} over
  {Streams}.
\newblock {\em Artificial Intelligence} 261:16--70.

\bibitem[\protect\citeauthoryear{Beeri and
  Vardi}{1981}]{DBLP:conf/icalp/BeeriV81}
Beeri, C., and Vardi, M.~Y.
\newblock 1981.
\newblock {The Implication Problem for Data Dependencies}.
\newblock In {\em ICALP},  73--85.

\bibitem[\protect\citeauthoryear{Bellomarini, Sallinger, and
  Gottlob}{2018}]{DBLP:journals/pvldb/BellomariniSG18}
Bellomarini, L.; Sallinger, E.; and Gottlob, G.
\newblock 2018.
\newblock {The Vadalog System: Datalog-based Reasoning for Knowledge Graphs}.
\newblock {\em PVLDB} 11(9):975--987.

\bibitem[\protect\citeauthoryear{Benedikt \bgroup et al\mbox.\egroup
  }{2017}]{bench_chase}
Benedikt, M.; Konstantinidis, G.; Mecca, G.; Motik, B.; Papotti, P.; Santoro,
  D.; and Tsamoura, E.
\newblock 2017.
\newblock {Benchmarking the Chase}.
\newblock In {\em {PODS}},  37--52.

\bibitem[\protect\citeauthoryear{Brandt \bgroup et al\mbox.\egroup
  }{2017}]{10.5555/3298239.3298397}
Brandt, S.; Kalayc\i{}, E.~G.; Kontchakov, R.; Ryzhikov, V.; Xiao, G.; and
  Zakharyaschev, M.
\newblock 2017.
\newblock {Ontology-Based Data Access with a Horn Fragment of Metric Temporal
  Logic}.
\newblock In {\em AAAI},  1070–1076.

\bibitem[\protect\citeauthoryear{Cal{\`{\i}} \bgroup et al\mbox.\egroup
  }{2010}]{DBLP:conf/lics/CaliGLMP10}
Cal{\`{\i}}, A.; Gottlob, G.; Lukasiewicz, T.; Marnette, B.; and Pieris, A.
\newblock 2010.
\newblock {Datalog+/-: {A} Family of Logical Knowledge Representation and Query
  Languages for New Applications}.
\newblock In {\em LICS},  228--242.

\bibitem[\protect\citeauthoryear{Cal{\`{\i}}, Gottlob, and
  Lukasiewicz}{2012}]{DBLP:journals/ws/CaliGL12}
Cal{\`{\i}}, A.; Gottlob, G.; and Lukasiewicz, T.
\newblock 2012.
\newblock A general datalog-based framework for tractable query answering over
  ontologies.
\newblock {\em Journal of Web Semantics} 14:57--83.

\bibitem[\protect\citeauthoryear{Calì, Gottlob, and
  Pieris}{2010}]{cali_query_2010}
Calì, A.; Gottlob, G.; and Pieris, A.
\newblock 2010.
\newblock Query {Answering} under {Non}-guarded {Rules} in {Datalog}+/-.
\newblock In {\em RR},  1--17.

\bibitem[\protect\citeauthoryear{Cuenca~Grau \bgroup et al\mbox.\egroup
  }{2013}]{DBLP:journals/jair/GrauHKKMMW13}
Cuenca~Grau, B.; Horrocks, I.; Kr{\"{o}}tzsch, M.; Kupke, C.; Magka, D.; Motik,
  B.; and Wang, Z.
\newblock 2013.
\newblock Acyclicity notions for existential rules and their application to
  query answering in ontologies.
\newblock {\em J. Artif. Intell. Res.} 47:741--808.

\bibitem[\protect\citeauthoryear{Dantsin \bgroup et al\mbox.\egroup
  }{2001}]{D+:datalogcomp}
Dantsin, E.; Eiter, T.; Gottlob, G.; and Voronkov, A.
\newblock 2001.
\newblock Complexity and expressive power of logic programming.
\newblock {\em ACM Computing Surveys} 33(3):374--425.

\bibitem[\protect\citeauthoryear{Dell'Aglio \bgroup et al\mbox.\egroup
  }{2017}]{DBLP:journals/datasci/DellAglioVHB17}
Dell'Aglio, D.; Della~Valle, E.; van Harmelen, F.; and Bernstein, A.
\newblock 2017.
\newblock Stream reasoning: {A} survey and outlook.
\newblock {\em Data Science} 1(1-2):59--83.

\bibitem[\protect\citeauthoryear{Deutsch, Nash, and
  Remmel}{2008}]{chase_nontermination}
Deutsch, A.; Nash, A.; and Remmel, J.
\newblock 2008.
\newblock The {Chase} {Revisited}.
\newblock In {\em {PODS}},  149--158.

\bibitem[\protect\citeauthoryear{Fagin \bgroup et al\mbox.\egroup
  }{2005}]{fagin_data_2005}
Fagin, R.; Kolaitis, P.~G.; Miller, R.~J.; and Popa, L.
\newblock 2005.
\newblock Data exchange: semantics and query answering.
\newblock {\em Theoretical Computer Science} 336(1):89 -- 124.

\bibitem[\protect\citeauthoryear{Gottlob, Lukasiewicz, and
  Pieris}{2014}]{DBLP:conf/kr/GottlobLP14}
Gottlob, G.; Lukasiewicz, T.; and Pieris, A.
\newblock 2014.
\newblock Datalog+/-: {Questions and Answers}.
\newblock In {\em KR},  682--685.

\bibitem[\protect\citeauthoryear{Gottlob, Morak, and
  Pieris}{2015}]{DBLP:conf/rweb/GottlobMP15}
Gottlob, G.; Morak, M.; and Pieris, A.
\newblock 2015.
\newblock {Recent Advances in {D}atalog$^\pm$}.
\newblock In {\em Reasoning Web},  193--217.

\bibitem[\protect\citeauthoryear{Kalayc{\i} \bgroup et al\mbox.\egroup
  }{2019}]{DBLP:journals/amcs/Kalayci0CRXZ19}
Kalayc{\i}, E.~G.; Brandt, S.; Calvanese, D.; Ryzhikov, V.; Xiao, G.; and
  Zakharyaschev, M.
\newblock 2019.
\newblock Ontology-based access to temporal data with ontop: {A} framework
  proposal.
\newblock {\em Applied Mathematics and Computer Science} 29(1):17--30.

\bibitem[\protect\citeauthoryear{Kharlamov \bgroup et al\mbox.\egroup
  }{2019}]{Kharlamov+19:STARQL}
Kharlamov, E.; Kotidis, Y.; Mailis, T.; Neuenstadt, C.; Nikolaou, C.;
  {\"{O}}z{\c{c}}ep, {\"{O}}.~L.; Svingos, C.; Zheleznyakov, D.; Ioannidis,
  Y.~E.; Lamparter, S.; M{\"{o}}ller, R.; and Waaler, A.
\newblock 2019.
\newblock An ontology-mediated analytics-aware approach to support monitoring
  and diagnostics of static and streaming data.
\newblock {\em Journal of Web Semantics} 56:30--55.

\bibitem[\protect\citeauthoryear{Krötzsch, Marx, and
  Rudolph}{2019}]{KMR:ChasePower2019}
Krötzsch, M.; Marx, M.; and Rudolph, S.
\newblock 2019.
\newblock {The Power of the Terminating Chase}.
\newblock In {\em ICDT},  3:1--3:17.

\bibitem[\protect\citeauthoryear{Le-Phuoc \bgroup et al\mbox.\egroup
  }{2011}]{le-phuoc_native_2011}
Le-Phuoc, D.; Dao-Tran, M.; Xavier~Parreira, J.; and Hauswirth, M.
\newblock 2011.
\newblock A {Native} and {Adaptive} {Approach} for {Unified} {Processing} of
  {Linked} {Streams} and {Linked} {Data}.
\newblock In {\em {ISWC}},  370--388.

\bibitem[\protect\citeauthoryear{Margara \bgroup et al\mbox.\egroup
  }{2014}]{margara_streaming_2014}
Margara, A.; Urbani, J.; Van~Harmelen, F.; and Bal, H.
\newblock 2014.
\newblock Streaming the web: {Reasoning} over dynamic data.
\newblock {\em Journal of Web Semantics} 25:24--44.

\bibitem[\protect\citeauthoryear{Marnette}{2009}]{Marnette09:superWA}
Marnette, B.
\newblock 2009.
\newblock {Generalized Schema-Mappings: from Termination to Tractability}.
\newblock In {\em PODS},  13--22.

\bibitem[\protect\citeauthoryear{Pieris}{2011}]{pieris2011ontological}
Pieris, A.
\newblock 2011.
\newblock {\em Ontological query answering: new languages, algorithms and
  complexity}.
\newblock Ph.D. Dissertation, Oxford University.

\bibitem[\protect\citeauthoryear{Ronca \bgroup et al\mbox.\egroup
  }{2018}]{DBLP:conf/kr/RoncaKGH18}
Ronca, A.; Kaminski, M.; Cuenca~Grau, B.; and Horrocks, I.
\newblock 2018.
\newblock {The Window Validity Problem in Rule-Based Stream Reasoning}.
\newblock In {\em KR},  571--581.

\bibitem[\protect\citeauthoryear{Tiger and Heintz}{2016}]{TigerH16}
Tiger, M., and Heintz, F.
\newblock 2016.
\newblock {Stream Reasoning Using Temporal Logic and Predictive Probabilistic
  State Models}.
\newblock In {\em TIME},  196--205.

\bibitem[\protect\citeauthoryear{Tsamoura \bgroup et al\mbox.\egroup
  }{2021}]{glog}
Tsamoura, E.; Carral, D.; Malizia, E.; and Urbani, J.
\newblock 2021.
\newblock Materializing knowledge bases via trigger graphs.
\newblock {\em PVLDB} 14(6):943--956.

\bibitem[\protect\citeauthoryear{Urbani \bgroup et al\mbox.\egroup
  }{2018}]{vlog-chase}
Urbani, J.; Kr{\"{o}}tzsch, M.; Jacobs, C. J.~H.; Dragoste, I.; and Carral, D.
\newblock 2018.
\newblock {Efficient Model Construction for Horn Logic with VLog - System
  Description}.
\newblock In {\em {IJCAR}},  680--688.

\ifthenelse{\boolean{TR}}{}
{
\bibitem[\protect\citeauthoryear{Urbani, Kr{\"o}tzsch, and Eiter}{2022}]{tr}
Urbani, J.; Kr{\"o}tzsch, M.; and Eiter, T.
\newblock 2022.
\newblock Chasing Streams with Existential Rules.
\newblock {\em arXiv:XXXX [cs]}.
\newblock arXiv: 1701.07657.
}

\bibitem[\protect\citeauthoryear{Wa{\l}ega, Kaminski, and
  Cuenca~Grau}{2019}]{walkega2019reasoning}
Wa{\l}ega, P.~A.; Kaminski, M.; and Cuenca~Grau, B.
\newblock 2019.
\newblock {Reasoning over Streaming Data in Metric Temporal Datalog}.
\newblock In {\em {AAAI}},  3092--3099.

\end{thebibliography}
\end{document}